\definecolor{grey}{rgb}{0.33, 0.33, 0.33}
\newtcolorbox{mybox}{colback=blue!8!white,colframe=blue!8!white,left=1mm,top=-2mm,right=1mm,boxsep=0mm,width=10cm,before=\par\smallskip\centering,after=\par,
height=1cm}
\crefname{section}{Sec.}{Secs.}
\Crefname{section}{Section}{Sections}
\Crefname{table}{Table}{Tables}
\crefname{table}{Tab.}{Tabs.}
\newcommand{\squishlist}{
\begin{list}{{{\small{$\bullet$}}}}
{\setlength{\itemsep}{3pt}      \setlength{\parsep}{1pt}
\setlength{\topsep}{1pt}       \setlength{\partopsep}{0pt}
\setlength{\leftmargin}{1em} \setlength{\labelwidth}{1em}
\setlength{\labelsep}{0.5em} } }
\newcommand{\squishend}{  \end{list}  }
\newtheorem*{remark}{Remark}
\newcommand{\BR}{\mathbbm{1}}
\newcommand{\PG}{DuelGAN}
\newcommand{\E}{\mathbb E}
\newcommand{\PP}{\mathbb P}
\newcommand{\dd}{\text{Duel-D}}
\newcommand{\pd}{p_{\text{duel}}}
\newtheorem{theorem}{Theorem}
\newtheorem{prop}{Proposition}
\newtheorem{lemma}{Lemma}
\newsavebox\MBox
\newcommand\Cline[2][red]{{\sbox\MBox{$#2$}%
  \rlap{\usebox\MBox}\color{#1}\rule[-1.2\dp\MBox]{\wd\MBox}{0.5pt}}}
\title{\textsf{DuelGAN}: A Duel Between Two Discriminators\\ Stabilizes the GAN Training}
\author{
Jiaheng Wei\thanks{Equal Contribution}\\
  UC Santa Cruz\\
  \texttt{jiahengwei@ucsc.edu} \\
  \And
  Minghao Liu\footnotemark[1] \\
  UC Santa Cruz\\
  \texttt{miu40@ucsc.edu} \\
  \And
  Jiahao Luo \\
  UC Santa Cruz\\
  \texttt{jluo53@ucsc.edu} \\
  \And
  Andrew Zhu \\
  UC Santa Cruz\\
  \texttt{angzhu@ucsc.edu} \\
  \And
  James Davis \\
  UC Santa Cruz\\
  \texttt{davis@cs.ucsc.edu} \\
  \And
 Yang Liu\thanks{Correspondence to {yangliu@ucsc.edu}}\\
  UC Santa Cruz\\
  \texttt{yangliu@ucsc.edu} \\
}
\begin{document}
\maketitle

\begin{abstract}
  In this paper, we introduce \PG{}, a generative adversarial network (GAN) solution to improve the stability of the generated samples and to mitigate mode collapse. Built upon the Vanilla GAN's two-player game between the discriminator $D_1$ and the generator $G$, we introduce a peer discriminator $D_2$ to the min-max game. Similar to previous work using two discriminators, the first role of both $D_1$, $D_2$ is to distinguish between generated samples and real ones, while the generator tries to generate high-quality samples which are able to fool both discriminators. 
     Different from existing methods, we introduce a duel between $D_1$ and $D_2$ to discourage their agreement and therefore increase the level of diversity of the generated samples. This property alleviates the issue of early mode collapse by preventing $D_1$ and $D_2$ from converging too fast. We provide theoretical analysis for the equilibrium of the min-max game formed among $G,D_1,D_2$. We offer convergence behavior of \PG{} as well as stability of the min-max game. It's worth mentioning that \PG{} operates in the unsupervised setting, and the duel between $D_1$ and $D_2$ does not need any label supervision. Experiments results on a synthetic dataset and on real-world image datasets (MNIST, Fashion MNIST, CIFAR-10, STL-10, CelebA, VGG, and FFHQ) demonstrate that \PG{} outperforms competitive baseline work in generating diverse and high-quality samples, while only introduces negligible computation cost.
\end{abstract}

\section{Introduction}

Vanilla GAN (Generative Adversarial Nets \cite{gan}) proposed a data generating framework through an adversarial process which has achieved great success in image generation \cite{gan,progan,msgan,autogan,biggan,crgan,distgan,sngan,gradientgan,mmdgan,iwgan,dieng2019prescribed}, image translation \cite{cyclegan,stackgan,tacgan,resolutiongan}, and other real-life applications \cite{cartoongan,pose,emoji,image_edit,face_aging,gpgan,super_resolution,impainting1,impainting2,face_completion,videos}. However, training Vanilla GAN is usually accompanied with a number of common problems, for example, vanishing gradients, mode collapse and failure to converge. Unfortunately, none of these issues have been completely addressed. There is a large amount of follow up work on Vanilla GAN. Due to space limitations, we only discuss the two most related lines of works. 

\subsection{Stable and Diverse GAN Training}
Several stabilization techniques have been implemented in GAN variants. Modifying architectures is the most extensively explored category. Radford et al. \cite{DCGAN} make use of convolutional and convolutional-transpose layer in training the discriminator and generator. Karras et al. \cite{progan} adopt a hierarchical architecture and trains the discriminator and generator with progressively increasing size. Huang et al. \cite{sgan} proposed a generative model which consists of a top-down stack of GANs. Chen et al. \cite{infogan} split the generator into the noise prior and also latent variables. The optimization task includes maximizing the mutual information between latent variables and the observation. Designing suitable loss functions is another favored technique. Successful designs include $f$-divergence based GAN \cite{f_gan,mao2017least} (these two approaches replace loss functions of GAN by estimated variational $f$-divergence or least-square loss respectively), introducing auxiliary terms in the loss function \cite{UnrolledGAN} and integral probability metric based GAN \cite{arjovsky2017wasserstein,iwgan,dragan,lsgan}. A detailed survey of methods for stabilizing GANs exists \cite{stablegan_survey}. 

\subsection{Multi-Player GANs}
Multi-player GANs explore the situation where there are multiple generators or multiple discriminators. The first published work to introduce multiple discriminators to GANs is multi-adversarial networks, in which discriminators can range from an unfavorable adversary to a forgiving teacher~\cite{gman}. Nguyen et al. \cite{D2GAN} formulate D2GAN, a three-player min-max game which utilizes a combination of Kullback-Leibler (KL) and reverse KL divergences in the objective function and is the most closely related to our work. Albuquerque et al. \cite{hgan} show that training GAN variants with multiple discriminators is a practical approach even though extra capacity and computational cost are needed.  Employing multiple generators and one discriminator to overcome the mode collapse issue and encourages diverse images has also been proposed \cite{mgan,madgan}. 

In contrast to the above existing work, we demonstrate the possibility of improving GAN training with a computationally light modification by adding only one competing discriminator. We introduce a duel game among two discriminators and demonstrate the benefits of doing so in stabilizing and diversifying the training. 

Our main contributions summarize as follows:
\squishlist
    \item We introduce a duel  between two discriminators to encourage diverse predictions and avoid early failure. The intuition is that predictions with high consensus will be discouraged, and effectively both discriminators are rewarded for having diverse predictions. The introduced game between the two discriminators results in a different convergence pattern for the generator. 
    \item Theoretically, we derive the equilibrium for discriminators and the generator. We show how \PG{} alleviates the vanishing gradient issue and mode collapse intuitively and empirically. We derive evidence for how the peer discriminator helps the dynamics of the learning. In addition, we demonstrate that if the peer discriminator is better than a random guess classifier, the intermediate game and the objective function in \PG{} are stable/robust to a bad peer discriminator.
    \item Experimental results on a synthetic dataset validate that \PG{} addresses mode collapse. Results on real datasets demonstrate that \PG{} generates high-quality image samples compared with baseline works. Besides, the introduced duel-game could also be viewed as a regularizer which complements well with existing methods and further improves the performance.
\squishend

\section{Background}
We first review Vanilla GAN and D2GAN, which are the most relevant to understanding our proposed \PG{}.

\subsection{Vanilla GAN \cite{gan}}
Let $\{x_i\}_{i=1}^{n}\subseteq \mathcal{X}$ denote the given training dataset drawn from the unknown distribution $p_{\text{data}}$. Traditional GAN formulates a two-player game: a discriminator $D$ and a generator $G$.  To learn the generator $G$'s distribution over $\mathcal{X}$, $G$ maps a prior noise distribution $p_z(z)$ to the data space. $\forall x\in \mathcal{X}$, $D(x)$ returns the probability that $x$ belongs to $p_{\text{data}}$ rather than $p_g$, where $p_g$ denotes the distribution of $G(z)$ implicitly defined by $G$. GAN trains $D$ to maximize the probability of assigning the correct label to both training samples and those from the generator $G$. Meanwhile, GAN trains $G$ to minimize $\log(1-D(G(z)))$.
\begin{align}
\label{equ:gan}
    \min_{G}\max_{D}V(D, G)=&\E_{x\thicksim p_{\text{data}}}[\log D(x)]+\E_{z\thicksim p_{z}}\Big[\log \Big(1-D\big(G(z)\big)\Big)\Big].
\end{align}

\subsection{D2GAN \cite{D2GAN}}
D2GAN is the most closely related method to \PG{}. This three-player game aims to solve the mode collapse issue and the optimization task is equivalent to minimizing both KL divergence and Reverse-KL divergence between $p_{\text{data}}$ and $p_g$.
The formulation of D2GAN comes as follows:
\begin{align}
\label{equ:d2gan}
        \min_{G}\max_{D_1, D_2}V(D_1, D_2, G)=&\alpha \cdot \E_{x\thicksim p_{\text{data}}}[\log D_1(x)]+
        \Cline{\E_{z\thicksim p_{z}}\big[-D_1\big(G(z)\big)\big]}
        \notag\\&+\Cline{\E_{x\thicksim p_{\text{data}}}[- D_2(x)]}+\beta \cdot \Cline{\E_{z\thicksim p_{z}}\big[\log D_2\big(G(z)\big)\big]}.
\end{align}
Given a sample $x$ in data space, $D_1(x)$ rewards a high score if $x$ is drawn from $p_{\text{data}}$, and gives a low score if generated
from the generator distribution $p_g$. In contrast, $D_2(x)$ returns a high score for $x$ generated from $p_g$ and gives a low score for a sample drawn from $p_{\text{data}}$. Our work is similar to D2GAN in containing a pair of discriminators, however instead of discriminators with different goals, we use identical discriminators and introduce a duel/competition between them. 
\section{\PG{}: A Duel Between Two Discriminators}

In this section, we first give the formulation and intuition of \PG{}. Then we will present the equilibrium strategy of the generator and the discriminators.
\subsection{Formulation}
\label{sec:formulate}
\begin{figure*}[t]
\vspace{-0.18in}
    \centering
    {\includegraphics[width=\textwidth]{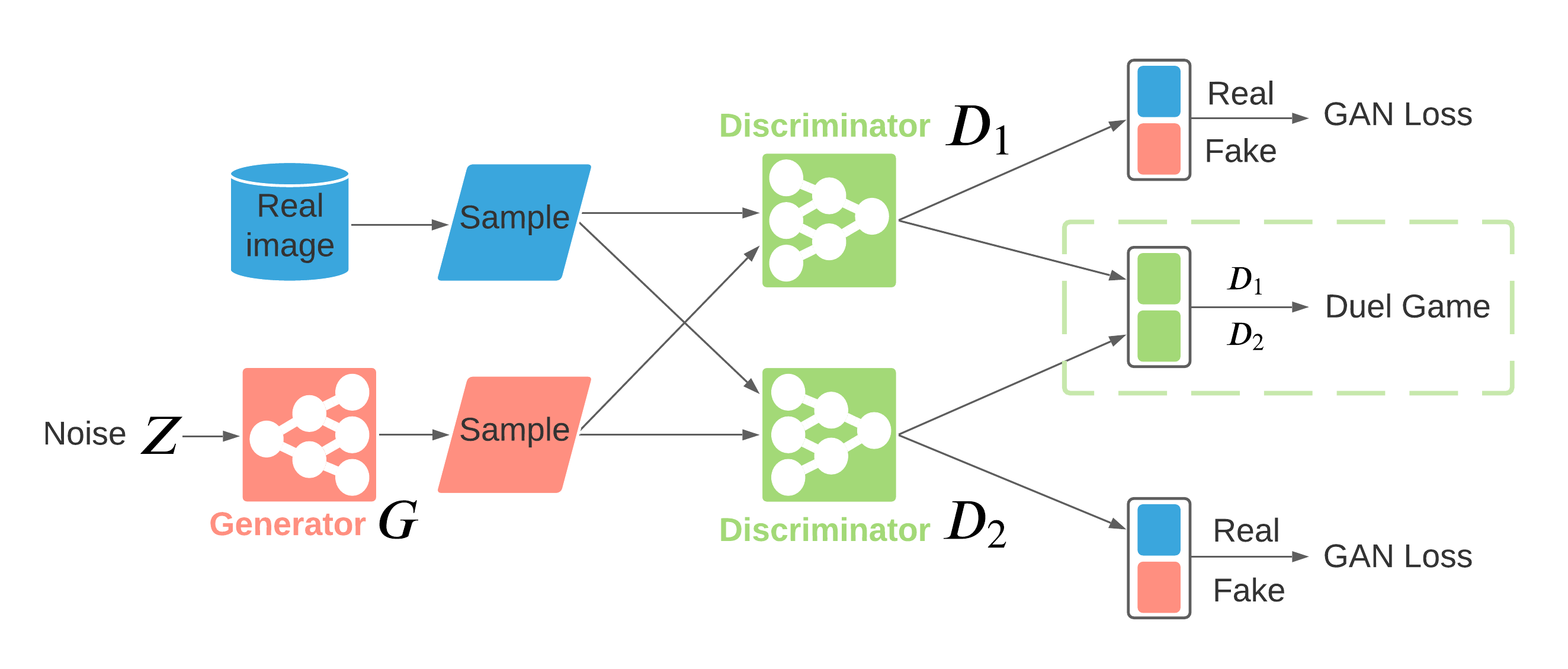}
    }
        \vspace{-5pt}
        \caption{Illustration of the proposed \PG{}. Compared with Vanilla GAN, \PG{} has one more identical discriminator and a Duel Game between two discriminators. The introduced Duel Game induces diversified generated samples by discouraging the agreement between $D_1$ and $D_2$. In D2GAN, although both discriminators are trained with different loss functions, they do not interfere with each other in the training.
        \vspace{-10pt}
    }
    \label{Fig:vis_dif}
\end{figure*}
Similar to related works, we assume that the data follows the distribution $p_{\text{data}}$, our ultimate goal is to achieve $p_{g}=p_{\text{data}}$ where $p_{g}$ is the generator's distribution. \PG{} formulates a three-player game which consists of two discriminators $D_1$, $D_2$ and one generator $G$. Denote by $\pd$ an equal mixture of $p_{\text{data}}$ and $p_{g}$, $\forall x$: $
\pd(x)=\frac{p_{\text{data}}(x)+p_{g}(x)}{2}.$
Recall that $p_z$ denotes the prior noise distribution, now we are ready to formulate the min-max game of \PG{} as follows:
\begin{tcolorbox}[colback=grey!10!white,colframe=grey!10!white]
\begin{align}
\label{equ:peergan}
    &\min_{G} \max_{D_1, D_2}\mathcal{L}(D_1, D_2, G)\notag\\
    =&\min_{G} \max_{D_1, D_2} \E_{x\thicksim p_{\text{data}}}\left[\log D_1(x)\right]+\Cline{\E_{x\thicksim p_{\text{data}}}\left[\log D_2(x)\right]}\notag\\
    &+\beta \cdot \Cline{\dd}+\Cline{\E_{z\thicksim p_{z}}\left[\log \left(1-D_1\left(G(z)\right)\right)\right]}+\Cline{\E_{z\thicksim p_{z}}\left[\log \left(1-D_2\left(G(z)\right)\right)\right]},
\end{align}
\end{tcolorbox}
where $\dd$ introduces the duel (a peer competition game) among $D_1,D_2$, defined as:
\begin{align}
    \label{equ: peer_d}
     \dd=\E_{x\thicksim \pd}&\Bigg[\underbrace{\ell\Big(D_1(x),\BR\big(D_2(x)>\dfrac{1}{2}\big)\Big)}_\text{Term \text{1a}}-\alpha \cdot \underbrace{\ell\Big(D_1(x_{p_1}),\BR\big(D_2(x_{p_2})>\dfrac{1}{2}\big)\Big)\Bigg]}_\text{Term \text{1b}}\notag\\
     +\E_{x\thicksim \pd}&\Bigg[\underbrace{\ell\Big(D_2(x),\BR\big(D_1(x)>\dfrac{1}{2}\big)\Big)}_\text{Term \text{2a}}-\alpha \cdot \underbrace{\ell\Big(D_2(x_{p_1}),\BR\big(D_1(x_{p_2})>\dfrac{1}{2}\big)\Big)\Bigg]}_\text{Term \text{2b}}.
\end{align}
In $\dd$, $x_{p_1}$ and $x_{p_2}$ are drawn randomly from $\pd$ and that \underline{$x,x_{p_1}$ and $x_{p_2}$} \underline{are independent with each other.} $\BR(\cdot)$ is the indicator function, $\alpha,\beta\in [0,1]$ are hyper-parameters controlling the disagreement level and the weight of the competition game between two discriminators, respectively. $\ell$ is an evaluation function, for simplicity, we adopt $\ell = \log(\cdot)$, as commonly used in other terms in the min-max game. Thus, we have:
\begin{align}
  \ell(D_i(x),y) =
    \begin{cases}
      \log\big(D_i(x)\big) & \text{if  $y=1$;}\\
      \log\big(1-D_i(x)\big) & \text{if $y=0$.}
    \end{cases}       
\end{align}
To clarify the differences among Vanilla GAN \cite{gan}, D2GAN \cite{D2GAN} and \PG{}, we use an workflow to illustrate in Figure \ref{Fig:vis_dif}. The key differences in \PG{}'s formulation can be summarized as follows: 
\squishlist
    \item Compared with Vanilla GAN (see Eqn.(\ref{equ:gan})), \PG{} (see Eqn.(\ref{equ:peergan})) introduces a peer discriminator $D_2$ which has the same objective function as $D$ appeared in Eqn.(\ref{equ:gan}). An intermediate duel game $\dd$ is added which will be explained below.
    \item The difference between D2GAN (see Eqn.(\ref{equ:d2gan})) and \PG{} is highlighted with the underscores in red. Primarily, there is no interaction between discriminators in D2GAN, while our $\dd$ term introduces another duel game between the discriminators, which we explain below. In addition to \dd, the objective function in \PG{} encourages both discriminators to fit perfectly on both training samples and generated samples. While in D2GAN, one discriminator fits overly on training samples, the other fits overly on generated samples.  
\squishend
\subsection{Competition Introduced by \dd} 
$\dd$ bridges $D_1$ and $D_2$ by introducing 4 terms specified in Eqn.(\ref{equ: peer_d}). Since we do not expect arbitrarily different discriminators, and both $D_i$s should play against the generator $G$, \text{Term \text{1a}} and \text{Term \text{2a}} encourage agreements between $D_1$ and $D_2$. With only these two terms, $D_1$ and $D_2$ will eventually be encouraged to converge to agree with each other. Mode collapse issue remains a possibility. \PG{} introduces \text{Term \text{1b}} and \text{Term \text{2b}} to the objective function which punish $D_1$ and $D_2$ from over-agreeing with each other (where the duel happens), especially at the early phase of training.  Particularly, the Term 1b and 2b are evaluating the agreements of $D_1$ and $D_2$ on two entirely independent samples $x_{p_1}, x_{p_2}$. Because of the independence, the two discriminators' predictions should not match with high probability. Note that the calculation of $\dd$ does not need label supervisions, which distinguishes our work from other works that introduces multiple discriminators but would require additional label supervisions \cite{tacgan}.

We provide more details of our intuition as well as theoretical evidences of this property in Section \ref{sec:prop}. 

\subsection{The Max Game of Discriminators}
\label{sec:max}
Denote the true label of $x$ as $y=1$ if $x$ comes from $p_{\text{data}}$, otherwise, $y=0$. For any given generator $G$, let us first analyze the best responding/optimal discriminator $D_{i,G}^*(x),~i\in {1,2}.$ We define the following quantities: 
\begin{align}
    r_{i, G}(x) &:= \mathbb{P}_{x\thicksim \pd}\Big(\BR\big(D_{i}(x)>\dfrac{1}{2}\big)=1\Big), \quad p_{i,G}:=\mathbb{E}_{x\thicksim \pd} [r_{i, G}(x)],
\end{align}
where $r_{i, G}(x)$ represents the probability/confidence of $x$ being categorized as the real data by $D_i$ and $p_{i,G}$ is the expectation of $r_{i, G}(x)$ for $x\sim \pd$.
Let $\hat{r}^*_{i, G}(x):=r_{i,G}(x)-\alpha\cdot p_{i,G}$.  Given discriminator $D_{i}$, when there is no confusion,  \underline{we use $D_{j}$ to denote the peer discriminator without telling $j\neq i$ in later sections.}

\begin{prop}\label{prop: opt_d}
For $G$ fixed, denote by $w:=\beta\cdot (1-\alpha)$, the optimal discriminators $D_{1}, D_{2}$ are given by:

\begin{align}
    D_{i,G}^*(x)=\dfrac{p_{\text{data}}(x)+\beta \cdot \hat{r}_{j, G}^*(x)\cdot \pd(x)}{p_{\text{data}}(x)+p_{g}(x)+w\cdot \pd(x)}, \quad i=1,2.
\end{align}
\end{prop}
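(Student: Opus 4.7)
By the symmetry of $D_1$ and $D_2$ in the objective $\mathcal{L}(D_1,D_2,G)$, it suffices to derive $D_{1,G}^*$ with $G$ and $D_2$ held fixed; swapping indices then gives $D_{2,G}^*$. My plan mirrors the Vanilla GAN derivation: collect every $D_1$-dependent piece of $\mathcal{L}$ into a single integral of the form $\int[A(x)\log D_1(x)+B(x)\log(1-D_1(x))]\,dx$ and then maximize pointwise, using the standard fact that $a\log y + b\log(1-y)$ is maximized on $[0,1]$ at $y = a/(a+b)$ whenever $a,b\ge 0$.

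The first step is to isolate the $D_1$-dependent terms in Eqn.~(\ref{equ:peergan}): the standard $\E_{x\sim p_{\text{data}}}[\log D_1(x)]$ and $\E_{x\sim p_g}[\log(1-D_1(x))]$, plus Terms 1a and 1b of $\dd$, the two places where $D_1$ appears as the first argument of $\ell$. Terms 2a and 2b involve $D_1$ only inside the indicator $\BR(D_1(\cdot)>1/2)$, which serves as a non-differentiable label for $D_2$'s cross-entropy; following the standard convention for such self-training-style losses, these are frozen during the maximization over $D_1$. Expanding $\ell(D_1(x),\cdot)$ in Term 1a gives $\int\pd(x)\bigl[r_{2,G}(x)\log D_1(x)+(1-r_{2,G}(x))\log(1-D_1(x))\bigr]dx$. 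For Term 1b, the crucial use of the stated independence of $x$, $x_{p_1}$, $x_{p_2}$ lets me factor the expectation of $\BR(D_2(x_{p_2})>1/2)$ into the constant $p_{2,G}$, collapsing the term to $\int\pd(x)\bigl[p_{2,G}\log D_1(x) + (1-p_{2,G})\log(1-D_1(x))\bigr]dx$.

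Combining these with the signs $+\beta$ on Term 1a and $-\alpha\beta$ on Term 1b and merging with the $p_{\text{data}}$ and $p_g$ contributions, the coefficient of $\log D_1(x)$ becomes $A(x)=p_{\text{data}}(x)+\beta\pd(x)[r_{2,G}(x)-\alpha p_{2,G}]=p_{\text{data}}(x)+\beta\hat r^*_{2,G}(x)\pd(x)$, while the coefficient of $\log(1-D_1(x))$ becomes $B(x)=p_g(x)+\beta(1-\alpha)\pd(x)-\beta\hat r^*_{2,G}(x)\pd(x)=p_g(x)+w\pd(x)-\beta\hat r^*_{2,G}(x)\pd(x)$. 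A short algebraic check gives $A(x)+B(x)=p_{\text{data}}(x)+p_g(x)+w\pd(x)$, so the pointwise maximizer $A(x)/(A(x)+B(x))$ is exactly the claimed $D_{1,G}^*(x)$, and the symmetric argument with indices swapped yields $D_{2,G}^*$. The only non-routine step is the conceptual commitment flagged above: one has to agree to treat the indicators $\BR(D_i(x)>1/2)$ used as targets inside $\ell$ as frozen during the inner max over the opposing discriminator, for otherwise the clean closed form would break. Once this convention is in place, the rest is a bookkeeping computation and the combination $w=\beta(1-\alpha)$ drops out naturally from the cancellation in $B(x)$.
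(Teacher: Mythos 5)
Your proposal is correct and follows essentially the same route as the paper's proof: expand $\mathcal{L}$ into integrals whose coefficients of $\log D_1(x)$ and $\log(1-D_1(x))$ are $p_{\text{data}}(x)+\beta\,\hat r^*_{2,G}(x)\,\pd(x)$ and $p_g(x)+\beta\,(1-\alpha-r_{2,G}(x)+\alpha p_{2,G})\,\pd(x)$, then maximize pointwise via the $a\log y+b\log(1-y)$ lemma (Lemma~\ref{lm:basic}). Your only addition is to state explicitly the convention that the indicators $\BR(D_i(\cdot)>\tfrac12)$ serving as targets in Terms 2a/2b are frozen during the inner maximization, which the paper uses implicitly through the definition of $r_{i,G}$ and $p_{i,G}$.
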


\subsection{The Min Game of the Generator}
Remember that the training objective for $D_{i}$ can be interpreted as maximizing the log-likelihood for estimating the conditional probability $\PP(Y=y|x)$ where $Y$ indicates whether $x$ comes from $p_{\text{data}}$ (with $y=1$) or from $p_{g}$ (with $y=0$). With the introduce of Duel Game, the distributions $p_{\text{data}}$ and $p_{g}$ in the Vanilla GAN got changed due to the appearance of $\pd$. Thus, we define the corresponding updated distributions in \PG{} w.r.t. discriminator $D_i$ as $p_{\text{data}_i}$ and $p_{g_i}$, respectively. For a clean presentation, we defer the exact form of $p_{\text{data}_i}$, $p_{g_i}$ in Appendix (Eqn.(\ref{eqn:new_dis})).

Denote $C(G):=\max_{D}\mathcal{L}(G, D_{1}, D_{2})$, the inner-max game ($C(G)$) can be rewritten as (straightforward in the proof of Proposition \ref{prop: opt_d} which is available in the Appendix \ref{app:p1}): 
\begin{align}
   C(G)=&\E_{x\thicksim p_{\text{data}_1}}[\log D_{1, G}^{*}(x)]+\E_{x\thicksim p_{g_1}}\big[\log \big(1-D_{1, G}^{*}(x)\big)\big] \notag\\
     +&\E_{x\thicksim p_{\text{data}_2}}[\log D_{2, G}^{*}(x)]+\E_{x\thicksim p_{g_2}}\big[\log \big(1-D_{2, G}^{*}(x)\big)\big].
\end{align}

\begin{theorem}
\label{thm: global_min}
When $\alpha=0, r_{j,G}(x)=\frac{1}{2}$, the global minimum of the virtual training criterion $C(G)$ is achieved if and only if $p_{\text{data}}=p_{g}$. At this point, $C(G)$ achieves the value of $-\log{16}$.
\end{theorem}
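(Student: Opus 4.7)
The plan is to adapt Goodfellow's classical global-optimality argument for vanilla GAN to the ``shifted'' densities that the duel term induces. The two hypotheses of the theorem are designed precisely to make the optimal responses of the two discriminators coincide and become symmetric in $p_{\text{data}}$ and $p_g$, after which the problem reduces to minimizing a Jensen--Shannon divergence. Concretely, I would first plug $\alpha=0$ and $r_{j,G}(x)=1/2$ into Proposition~\ref{prop: opt_d}: this gives $w=\beta(1-\alpha)=\beta$ and $\hat r^*_{j,G}(x)=r_{j,G}(x)-\alpha\,p_{j,G}=1/2$, so using $p_{\text{data}}+p_g=2\pd$,
\[
D^*_{i,G}(x)=\frac{p_{\text{data}}(x)+(\beta/2)\pd(x)}{(2+\beta)\pd(x)},\qquad 1-D^*_{i,G}(x)=\frac{p_g(x)+(\beta/2)\pd(x)}{(2+\beta)\pd(x)},
\]
so in particular $D^*_1\equiv D^*_2$ and both are symmetric in $p_{\text{data}},p_g$ up to the common offset $(\beta/2)\pd$.

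Next I would use the rewritten form of $C(G)$ stated right before the theorem. The modified densities $p_{\text{data}_i}$ and $p_{g_i}$ (Eqn.~(\ref{eqn:new_dis})) are, up to a shared normalizer of $(2+\beta)/2$, exactly $p_{\text{data}}+(\beta/2)\pd$ and $p_g+(\beta/2)\pd$; consequently $D^*_{i,G}(x)=p_{\text{data}_i}(x)/(p_{\text{data}_i}(x)+p_{g_i}(x))$. Each of the two summands in $C(G)$ then has exactly the vanilla GAN form in the variables $(p_{\text{data}_i},p_{g_i})$, and the standard identity $\E_{p}[\log(p/(p+q))]+\E_{q}[\log(q/(p+q))]=-\log 4+2\,\mathrm{JS}(p\,\|\,q)$ yields
\[
C(G)=-\log 16+2\bigl(\mathrm{JS}(p_{\text{data}_1}\,\|\,p_{g_1})+\mathrm{JS}(p_{\text{data}_2}\,\|\,p_{g_2})\bigr).
\]

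The conclusion then follows from $\mathrm{JS}\ge 0$ with equality iff the two arguments agree: $C(G)\ge -\log 16$, with equality iff $p_{\text{data}_i}=p_{g_i}$ for both $i$. Cancelling the shared $(\beta/2)\pd$ term, this is equivalent to $p_{\text{data}}=p_g$, which gives the claimed global minimum and the claimed value. The one piece of bookkeeping to check carefully is that the normalizing constants of $p_{\text{data}_i}$ and $p_{g_i}$ are indeed equal (both $(2+\beta)/2$), since that is what makes the $-\log 16$ constant come out independent of $\beta$; beyond that, the argument is just Goodfellow's classical JS-divergence reduction applied to the shifted pair $(p_{\text{data}_i},p_{g_i})$ for each discriminator and then summed.
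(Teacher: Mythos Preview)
Your proposal is correct and follows essentially the same route as the paper's own proof: plug the hypotheses $\alpha=0,\ r_{j,G}(x)=\tfrac12$ into Proposition~\ref{prop: opt_d} to obtain the symmetric optimal discriminators, rewrite $C(G)$ in the $(p_{\text{data}_i},p_{g_i})$ variables, and recognize Goodfellow's Jensen--Shannon identity. The paper writes the final nonnegative term as a single $\mathrm{JSD}$ between the (unnormalized) shifted measures $p_{\text{data}}+\tfrac{\beta}{2}\pd$ and $p_g+\tfrac{\beta}{2}\pd$, whereas you keep the two (identical) $\mathrm{JS}(p_{\text{data}_i}\,\|\,p_{g_i})$ terms separate and work with the normalized densities; your explicit check that both normalizers equal $(2+\beta)/2$ is exactly the bookkeeping the paper leaves implicit when passing to $\E_{p_{\text{data}_i}}[\cdot]$, $\E_{p_{g_i}}[\cdot]$.
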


\subsection{When $r_{j,G}(x)=\frac{1}{2}$?} 
Note that $r_{j,G}(x)$ is merely representing the probability that $D_j$ classifies $x$ to be real samples, $p_{j,G}$ is the probability that $D_{j}$ classifies a random sample as the real one. Without loss of generality, we assume real and generated samples are of uniform/equal prior. At the very beginning of the training process, the discriminator can do well in distinguishing real or generated samples, since the generator at this time generates low-quality samples. In this case, $r_{j,G}(x)$ is supposed to approach its max/min value, for example, $r_{j,G}(x)\to 0$ if $x$ is from generated samples, and otherwise, $r_{j,G}(x)\to 1$. During the training process, the generator progressively tries to mislead the predictions made by discriminators, which means the discriminator can not decide whether the sample is being fake or real. Thus, $r_{j, G}(x)\to \frac{1}{2}$. At this time, for $\alpha= 0, i=1,2$, we have:
\begin{align}
    D_{i,G}^*(x)&=\dfrac{p_{\text{data}}(x)+\beta \cdot \hat{r}_{j,G}^*(x)\cdot \pd(x)}{p_{\text{data}}(x)+p_{g}(x)+\beta\cdot \pd(x)}\rightarrow \dfrac{p_{\text{data}}(x)+\dfrac{\beta}{2} \cdot \pd(x)}{p_{\text{data}}(x)+p_{g}(x)+\beta\cdot \pd(x)}.
\end{align}
This allows us to rewrite $\frac{C(G)}{2}$ as: $\E_{x\thicksim p_{\text{data}_i}}\left[\log \frac{p_{\text{data}}(x)+\frac{\beta}{2}\cdot \pd(x)}{p_{\text{data}}(x)+p_{g}(x)+\beta\cdot \pd(x)}\right]+\E_{x\thicksim p_{g_i
     }}\left[\log \frac{p_{g}(x)+\frac{\beta}{2}\cdot \pd(x)}{p_{\text{data}}(x)+p_{g}(x)+\beta\cdot \pd(x)}\right].$ Our subsequent proof is then based on the above reformulation.

We summarize the overall DuelGAN algorithm in Algorithm \ref{m:alg1}. In experiments, we train $G$ to minimize $\log (1-D_i(G(z)))$ which is equivalent to maximizing $\log D_i(G(z))$.

\begin{algorithm}
\caption{\PG{}}\label{m:alg1}
\begin{algorithmic}[1]
\STATE \textbf{Input}: two discriminators $D_1, D_2$, generator $G$, training samples $\{x_i\}_{i=1}^{n}$, weights $\alpha, \beta$.\\
\STATE \textbf{For} number of training iterations \textbf{do}\\
 \setlength \parindent{10pt}  \textbf{For} $1$ to $k$ steps \textbf{do}\\
 \setlength \parindent{20pt} 
 \begin{itemize}
     \item Sample mini-batch of $m$ noise samples $Z=\{z_{1}, ..., z_{m}\}$ from noise prior $p_z$.
     \item Sample mini-batch of $m$ samples $X=\{x_{1}, ..., x_{m}\}$ from data generating distribution $p_{\text{data}}(x)$.
     \item Combine two subsets $T:=X \cup Z$, and denote by $T=\{t_1,...,t_{2m}\}$. 
     \item Update discriminator $D_i (i\in \{1, 2\})$ by ascending the stochastic gradient:
   { \begin{align}
    &\nabla_{\theta_{d_i}}\dfrac{1}{m}\sum_{i=1}^{m}\Big[\log D_{i}(x_{i})+\log \Big(1-D_{i}\big(G(z_{i})\big)\Big)\Big]\notag\\
    & +\dfrac{\beta}{2m}\sum_{j=1}^{2m}  \Bigg[ \ell_{\text{CE}}\Bigg(D_i(t_{j}), \BR\Big(D_{j}(t_{j})>\dfrac{1}{2}\Big)\Bigg)-\alpha \cdot  \ell_{\text{CE}}\Bigg(D_i(t_{p_1}), \BR\Big(D_{j}(t_{p_2})>\dfrac{1}{2}\Big)\Bigg)\Bigg],
    \end{align}}
    where $t_{p_1}, t_{p_2}$ are randomly selected (with replacement) samples from $T$.
    \item Update $G$ by descending its stochastic gradient:
  { \begin{align}
    \nabla_{\theta_{g}}\dfrac{1}{m}\sum_{i=1}^{m}\Big[\log \Big(1-D_{1}\big(G(z_{i})\big)\Big)
    +\log \Big(1-D_{2}\big(G(z_{i})\big)\Big)\Big].
    \end{align}}
 \end{itemize}
\end{algorithmic}
\end{algorithm}

\section{Properties of \PG{}}
\label{sec:prop}

In this section, we first illustrate how \PG{} alleviates common issues in GAN training, for example, the vanishing gradients issue and  the mode collapse issue. Then we present properties of \PG{} including its stability guarantee and converging behavior. 

\subsection{\PG{} and Common Issues in GAN Training}\label{sec:common}

\paragraph{Vanishing Gradients Issue}
In training GAN, discriminators might be too good for the generator to fool with and to improve progressively. When training with neural networks with back-propagation or gradient-based learning approaches, a vanishing small gradient only results in minor changes even with a large weight. As a result, the generator training may fail due to the vanishing gradients issue.

\paragraph{Mode Collapse Issue}
Mode collapse refers to the phenomenon that the generator will rotate through a small set of output types. For the given fixed discriminator, the generator over-optimizes in each iteration. Thus, the corresponding discriminator fails to learn its way out of the trap.

\paragraph{How \PG{} Alleviates the Vanish Gradient and Mode Collapse}
\PG{} alleviates the above two issues by preventing discriminators from "colluding" on its discrimination ability. In \PG{}, for either discriminator $D_i$, recall that $x_{p_1}$ and $x_{p_2}$ are randomly drawn from $\pd$ which are independent from each other. Then the max game of $D_i$, given its peer discriminator $D_j$, is to perform the following task:
\begin{align}
    \label{equ:punish}
    \max_{D_{i}}\mathcal{L}(D_{i},G)|_{D_{j}}=& \max_{D_i} \overbrace{\E_{x\thicksim p_{\text{data}}}[\log D_{i}(x)]+\E_{z\thicksim p_{z}}\Big[\log \Big(1-D_{i}\big(G(z)\big)\Big)\Big]}^{\text{Term \textcircled{a}}}\notag\\+&\beta \cdot\E_{x\thicksim \pd}\Big[\underbrace{\ell\Big(D_{i}(x),\BR\big(D_{j}(x)>\dfrac{1}{2}\big)\Big)}_\text{Term \textcircled{b}}\underbrace{-\alpha \cdot \ell\Big(D_{i}(x_{p_1}),\BR\big(D_{j}(x_{p_2})>\dfrac{1}{2}\big)\Big)}_\text{Term \textcircled{c}}\Big].
\end{align}
Term \textcircled{a} maximizes the probability of assigning the correct label to both real samples and generated samples. Term \textcircled{b} maximizes the probability of matching predicted label with peer discriminator predicted ones. In other words, Term \textcircled{b} controls the agreement level of $D_{i}$ with respect to its peer discriminator $D_j$. However, note that Term \textcircled{c} checks on the predictions of $D_{j}$ on two different tasks $x_{p_1}, x_{p_2}$. When $D_{i}$ agrees/fits overly on $D_{j}$, Term \textcircled{c} returns a lower value if $D_{j}$'s predictions on these two different tasks are matching, mathematically, $\BR\big(D_{j}(x_{p_1} )>\frac{1}{2}\big)= \BR(D_{j}\big(x_{p_2})>\frac{1}{2}\big)$. And Term \textcircled{c} will return a high value if $D_{j}$'s predictions on these two different tasks are indeed different $\BR\big(D_{j}(x_{p_1} )>\frac{1}{2}\big)\neq \BR\big(D_{j}(x_{p_2})>\frac{1}{2}\big)$. The weight $\alpha$ controls this disagreement level compared with Term \textcircled{b} by referring to the fact that a larger $\alpha$ encourages more disagreement/diverse predictions from discriminators.

Based on the above intuitions, when two discriminators are of a high disagreement level, there exists a set $S_{\text{dis}}$ such that $\BR(D_{i}(x)>\frac{1}{2})\neq \BR(D_{j}(x)>\frac{1}{2})$ for $x\in S_{\text{dis}}$ and $S_{\text{dis}}$ is non-negligible. Therefore, there exists at least one discriminator $D_{i}$ that can't perfectly predict labels (real/generated) of given data samples. The generator will then be provided with sufficient information, e.g., information or features that can be extracted from $S_{\text{dis}}$, to progress. This property helps us address the vanishing gradients issue. As for the mode collapse issue, suppose the over-optimized generator is able to find plausible outputs for both discriminators in the next generation. However, note that optimization is implemented on mini-batches in practice, the randomly selected samples $x_{p_1}, x_{p_2}$ in $\dd$ as well as the dynamically changing weights $\alpha, \beta$ can bring a certain degree of randomness in the next generation. Thus, rotating through this subset of the generator's output types could not force Term \textcircled{c} to remain unchanged, so that the discriminators won't maintain a constant disagreement level and they unlikely get stuck in a local optimum. In Section \ref{exp:mode_collapse}, we use synthetic experiments to show that \PG{} addresses mode collapse issues. And we include more empirical observations of the competition introduced by $\dd$ in the Appendix \ref{app:stab}, i.e., the stability of the \PG{} training, and the visualization of agreement levels between $D_1$ and $D_2$ due to the introduce of the duel game.

\subsection{Stability and Convergence Behavior}

In Section \ref{sec:common}, we discussed the significant role of the introduced intermediate duel game. Now we discuss the potential downsides of introducing a second discriminator. Particularly, we are interested in understanding if the introduce of a peer discriminator $D_j$ will disrupt the training and make the competition game with $D_i$ unstable. Suppose $D_{j}$ diverges from the optimum in the max game, in other words, the diverged peer discriminator $\tilde{D}_{j}$ fails to provide qualified verification label $Y^*_{j}$ (given by $D^*_{j,G}$), and provides $\tilde{Y}_{j}$ instead. Mathematically, denote:
\begin{align}
    e_{\text{data}, j}&:=\mathbb{P}(\tilde{Y}_{j}=0|Y^*_{j}=1), \quad
    e_{g, j}:=\mathbb{P}(\tilde{Y}_{j}=1|Y^*_{j}=0).
\end{align}
For any peer discriminator $D_j$, $D_j$ may be a diverged peer discriminator $\tilde{D}_j$ or an optimal one $D^*_{j,G}$, we denote the Duel Game of $D_i$ given her peer discriminator $D_j$ as:
    {\begin{align}
        \text{Duel}(D_i)|_{D_{j}}:=& \E_{x\thicksim \pd}\Big[\ell\Big(D_i(x),\BR\big(D_{j}(x)>\frac{1}{2}\big)\Big)-\alpha \cdot \ell\Big(D_i(x_{p_1}),\BR\big(D_{j}(x_{p_2})>\frac{1}{2}\big)\Big)\Big].
    \end{align}}
Theorem \ref{thm:stab1} explains the condition of stability (for $D_i$) when its peer discriminator in \PG{} diverges from the corresponding optimum.
\begin{theorem}
    \label{thm:stab1}
   Given $G$, suppose $D_{i}$ has enough capacity, and at one step of Algorithm 1, if $e_{data, j}+e_{g, j}<1$, $\alpha=1$, the duel term of discriminator $D_i$ is stable/robust with diverged peer discriminator $\tilde{D}_{j}$. Mathematically,
    \begin{align}
        \max_{D_{i}} \text{Duel}(D_i)|_{\tilde{D}_{j}}~~\text{is equivalent with}~~ \max_{D_{i}}\text{Duel}(D_i)|_{D_{j,G}^*}.
    \end{align}
\end{theorem}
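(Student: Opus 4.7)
The plan is to recognize the duel term under $\alpha=1$ as a peer-loss functional from the learning-with-noisy-labels literature and to apply the peer-loss noise-robustness identity. Let $Y^*_{j}(x):=\BR(D^*_{j,G}(x)>\tfrac{1}{2})$ be the label produced by the optimal peer and $\tilde{Y}_{j}(x):=\BR(\tilde{D}_{j}(x)>\tfrac{1}{2})$ the label produced by the diverged peer; by the definition of $e_{\text{data},j}$ and $e_{g,j}$, $\tilde{Y}_{j}$ is a class-conditionally noisy version of $Y^*_{j}$ with positive-flip rate $e_{\text{data},j}$ and negative-flip rate $e_{g,j}$. The target identity is
\begin{align*}
\text{Duel}(D_i)\big|_{\tilde{D}_{j}}=\bigl(1-e_{\text{data},j}-e_{g,j}\bigr)\cdot \text{Duel}(D_i)\big|_{D^*_{j,G}}.
\end{align*}
Because $e_{\text{data},j}+e_{g,j}<1$ makes the multiplicative factor strictly positive, both sides are maximized at the same $D_i$, which is the asserted equivalence.

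To derive the identity, I first expand $\ell(D_i(x),y)=y\log D_i(x)+(1-y)\log(1-D_i(x))$ so that $\text{Duel}(D_i)|_{D_j}$ is affine in the binary label. At $\alpha=1$ the peer term $\E_{x_{p_1},x_{p_2}}[\ell(D_i(x_{p_1}),\BR(D_{j}(x_{p_2})>\tfrac{1}{2}))]$ factorizes through the independence of $x_{p_1}$ and $x_{p_2}$ as $\mu\,\E[\log D_i(x_{p_1})]+(1-\mu)\,\E[\log(1-D_i(x_{p_1}))]$, where $\mu:=\PP_{x\sim\pd}(Y_{j}(x)=1)$ is the marginal positive rate of whichever label is used. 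Then I take the class-conditional-noise expectation: pointwise, $\tilde{Y}_{j}(x)$ is replaced by $(1-e_{\text{data},j})\BR(Y^*_{j}(x)=1)+e_{g,j}\BR(Y^*_{j}(x)=0)$, and marginally $\tilde{\mu}$ becomes $(1-e_{\text{data},j})q^*+e_{g,j}(1-q^*)$ with $q^*:=\PP_{x\sim\pd}(Y^*_{j}(x)=1)$. Substituting both expressions into $\text{Duel}(D_i)|_{\tilde{D}_j}$ and regrouping the four products $\BR(Y^*_{j}=1 \text{ or } 0)$ against $\log D_i(x)$ or $\log(1-D_i(x))$ makes the common factor $1-e_{\text{data},j}-e_{g,j}$ appear in front of the clean-label duel; this is exactly the standard peer-loss cancellation.

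The main obstacle is the algebraic bookkeeping behind that cancellation. Writing the four quantities $A=\E[\BR(Y^*_{j}=1)\log D_i]$, $B=\E[\BR(Y^*_{j}=1)\log(1-D_i)]$, $C=\E[\BR(Y^*_{j}=0)\log D_i]$, $D=\E[\BR(Y^*_{j}=0)\log(1-D_i)]$ and setting $a=1-e_{\text{data},j}$, $b=e_{g,j}$, $E=A+C$, $F=B+D$, the pointwise contribution of the noisy duel is $a(A-B)+b(C-D)+F$ while the factorized peer contribution is $(q^*a+(1-q^*)b)(E-F)+F$; one must verify that their difference collapses to $(a-b)\bigl[(A-B)(1-q^*)-(C-D)q^*\bigr]$, recognize the bracketed expression as $\text{Duel}(D_i)|_{D^*_{j,G}}$, and conclude via $a-b=1-e_{\text{data},j}-e_{g,j}>0$. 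Once this identity is in hand, the equivalence of the two maximization problems is immediate.
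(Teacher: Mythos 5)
Your proposal is correct and follows essentially the same route as the paper: condition on the clean peer label $Y^*_j$, use the independence of $x_{p_1},x_{p_2}$ to reduce the peer term to the marginal positive rate, and derive $\text{Duel}(D_i)|_{\tilde{D}_{j}}=(1-e_{\text{data},j}-e_{g,j})\cdot\text{Duel}(D_i)|_{D^*_{j,G}}$ at $\alpha=1$ (the paper carries a residual $(1-\alpha)$-weighted bias term that cancels exactly when $\alpha=1$), after which positivity of $1-e_{\text{data},j}-e_{g,j}$ gives the equivalence of the two maximizations. Your $A,B,C,D$ bookkeeping checks out and is just a cleaner rendering of the paper's algebra.
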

The above theorem implies that a diverging and degrading peer discriminator $D_j$ will not disrupt the training of $D_i$.
\begin{remark}
    Note that assuming uniform prior of real and generated samples, the condition to be stable is merely requiring that the proportion of false/wrong $D_{j}$'s prediction is less than a half (random guessing). This condition can be easily satisfied in practice. Thus, Theorem \ref{thm:stab1} provides the stability/robustness guarantee when the peer discriminator diverged from its optimum.
\end{remark}

Build upon Theorem \ref{thm: global_min}, with sufficiently small updates, Theorem \ref{thm: conv1} presents when $p_g$ converges to $p_{\text{data}}$.
\begin{theorem}
\label{thm: conv1}
    If $G$ and $D_i$s have enough capacity, and at each step of Algorithm 1, $D_i$s are allowed to reach its optimum given $G$, $D_i$ is updated so as to improve the criterion in Eqn.(\ref{equ:punish}), and $p_g$ is updated so as to improve:
    \begin{align}
   C(G)=&\E_{x\thicksim p_{\text{data}_1}}[\log D_{1, G}^{*}(x)]+\E_{x\thicksim p_{g_1}}\big[\log \big(1-D_{1, G}^{*}(x)\big)\big] \notag\\
     +&\E_{x\thicksim p_{\text{data}_2}}[\log D_{2, G}^{*}(x)]+\E_{x\thicksim p_{g_2}}\big[\log \big(1-D_{2, G}^{*}(x)\big)\big].
\end{align}
If $\beta=0$, we have $D_{1,G}^*=D_{2,G}^*$, $p_{g}$ converges to $p_{\text{data}}$.
\end{theorem}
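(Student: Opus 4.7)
The plan is to reduce the $\beta = 0$ case of \PG{} to the Vanilla GAN setup and then invoke the standard Goodfellow-style subgradient convergence argument. First I would specialize Proposition \ref{prop: opt_d} at $\beta = 0$: the weight $w = \beta \cdot (1-\alpha)$ vanishes and the numerator term $\beta \cdot \hat{r}^*_{j,G}(x)\cdot \pd(x)$ disappears, leaving
\begin{align*}
D_{i,G}^*(x) \;=\; \frac{p_{\text{data}}(x)}{p_{\text{data}}(x) + p_{g}(x)}, \qquad i = 1, 2,
\end{align*}
so $D_{1,G}^* = D_{2,G}^*$ immediately. I would also note that the auxiliary distributions $p_{\text{data}_i}$ and $p_{g_i}$ deferred to the appendix collapse back to $p_{\text{data}}$ and $p_{g}$ when $\beta = 0$, since the correction terms that distinguish them from the true marginals are all proportional to $\beta$.

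With those reductions, the virtual criterion becomes
\begin{align*}
C(G) \;=\; 2\,\E_{x \sim p_{\text{data}}}\!\bigl[\log D_G^{*}(x)\bigr] + 2\,\E_{x \sim p_{g}}\!\bigl[\log\bigl(1 - D_G^{*}(x)\bigr)\bigr],
\end{align*}
which is exactly twice the Vanilla GAN virtual criterion. Either by reusing the Jensen--Shannon calculation embedded in the proof of Theorem \ref{thm: global_min} or by citing the original argument, this functional is minimized over densities uniquely at $p_g = p_{\text{data}}$, with value $-\log 16$, matching the theorem's target.

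The last step is to promote ``unique minimizer'' into convergence of the iterates produced by Algorithm \ref{m:alg1}. I would follow the textbook argument: for each fixed $D$, the map $p_g \mapsto U(p_g, D) := \E_{p_{\text{data}}}[\log D] + \E_{p_g}[\log(1-D)]$ is linear, hence convex, in $p_g$; a pointwise supremum of convex functions is convex, so $p_g \mapsto \sup_D 2 U(p_g, D) = C(p_g)$ is convex. The standard subdifferential fact ``if $f = \sup_\alpha f_\alpha$ with each $f_\alpha$ convex and $\alpha^\star$ attains the supremum at $p_g$, then $\partial f_{\alpha^\star}(p_g) \subseteq \partial f(p_g)$'' then says that computing $D_G^{*}$ at the current $p_g$ and descending along $\nabla_{p_g} U(\cdot, D_G^{*})$, as Algorithm \ref{m:alg1} does in the limit of an inner optimum, is exactly subgradient descent on the convex function $C$. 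With sufficiently small updates this converges to the unique global minimizer $p_g = p_{\text{data}}$.

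The main obstacle, as in the original GAN analysis, is the gap between this density-space reasoning and the parameter-space reality in which $p_g$ lies inside the family $\{p_{G_\theta}\}$: convexity in $p_g$ does not transfer to convexity in $\theta$, so the argument only goes through in the nonparametric regime, which is precisely why the theorem hypothesizes that ``$G$ and $D_i$s have enough capacity.'' A minor bookkeeping point I would want to spell out carefully is that, although Theorem \ref{thm: global_min} was stated under the auxiliary conditions $\alpha = 0$ and $r_{j,G}(x) = \tfrac12$, its conclusion is needed here only when $\beta = 0$, in which case those conditions act exclusively on the duel term that has already been zeroed out — so no additional assumptions on $\alpha$ or $r_{j,G}$ are actually imposed by invoking it.
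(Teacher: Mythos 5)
Your proposal is correct and takes essentially the same route as the paper: at $\beta=0$ the objective decouples into two Vanilla GAN games, giving $D_{1,G}^*=D_{2,G}^*=p_{\text{data}}/(p_{\text{data}}+p_{g})$, and convergence of $p_g$ to $p_{\text{data}}$ follows from Goodfellow et al.'s Proposition 2 argument, which the paper merely cites while you spell out the convexity/subgradient details. Your additional observations, that $p_{\text{data}_i}, p_{g_i}$ collapse to $p_{\text{data}}, p_{g}$ at $\beta=0$ and that the $\alpha=0$, $r_{j,G}(x)=\tfrac{1}{2}$ conditions of Theorem \ref{thm: global_min} are not actually needed here, are consistent with the paper's reasoning.
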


\section{Experiments}
In this section, we empirically validate the properties of \PG{} through a set of datasets, including a synthetic task and several real world datasets ranging from hand-written digits to human faces.

\subsection{Experiment Results on Synthetic Data}
\label{exp:mode_collapse}

\begin{figure*}[htb]
  \centering

\includegraphics[width=0.3\textwidth]{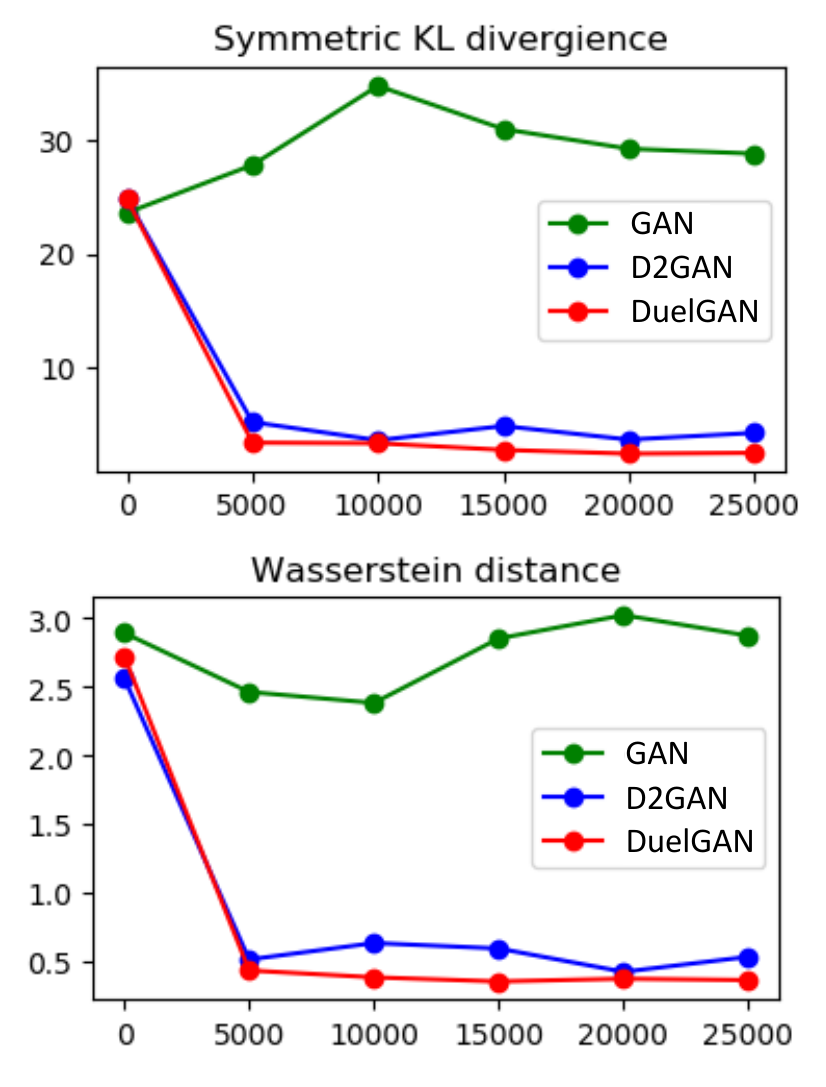}
\includegraphics[width=0.64\textwidth]{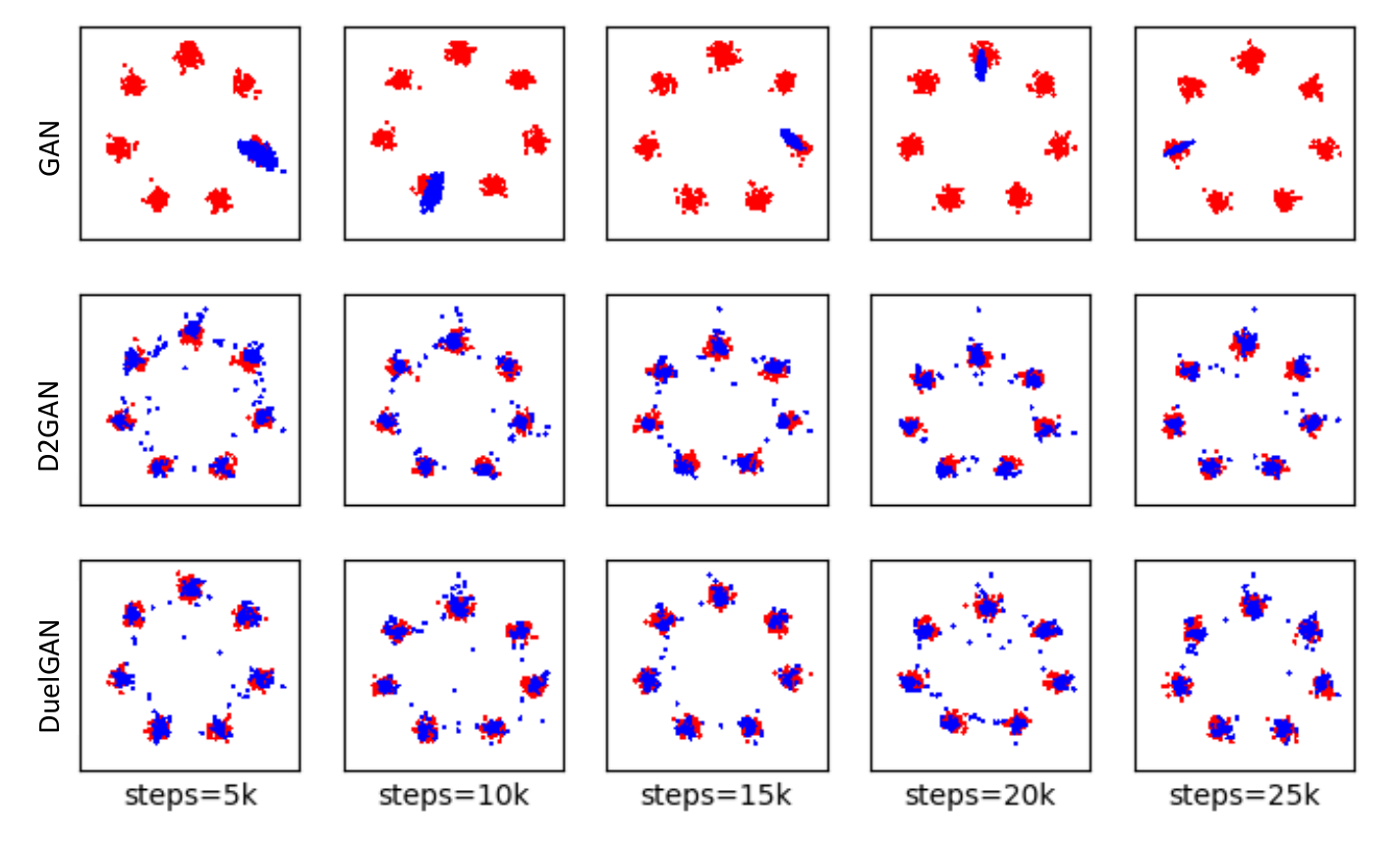}
\caption{Comparison of Vanilla GAN, D2GAN, and proposed \PG{} on 2D synthesized data. The top-left graph shows the symmetric KL divergence over the training iterations, while the bottom left graph shows the Wasserstein distance. Both metrics compare the generated data points to data points drawn from the true target distribution. \PG{} has the best performance. The right side visualizes generated {\color{blue}blue} data points and true {\color{red}red} $p_{\text{data}}$ data points. Note that Vanilla GAN has a clear mode collapse which both D2GAN and \PG{} avoid.}\label{synthesized data results}
\end{figure*}

We apply the experiment and model structures proposed in UnrolledGAN \cite{UnrolledGAN} to investigate whether the \PG{} design can prevent mode collapse. This experiment aims to generate eight 2D Gaussian distributions with a covariance matrix $0.02I$,  arranged around the same centroid with radius 2.0. Vanilla GAN fails on this example. D2GAN has been shown to outperform UnrolledGAN, so we include it as an alternate method which performs well. 

Figure \ref{synthesized data results} shows symmetric KL-divergence, Wasserstein distance, and a visualization of results with Vanilla GAN, D2GAN, and \PG{}. Knowing the target distribution $p_{\text{data}}$, we can employ symmetric KL divergence and Wasserstein distance, which calculate the distance between the true $p_{\text{data}}$ and the normalized histogram of 10,000 generated points. On the left of Figure \ref{synthesized data results}, the plots for symmetric KL-divergence and Wasserstein distance show that \PG{} has a much better score than Vanilla GAN and slightly better than D2GAN. 

On the right side of Figure \ref{synthesized data results} is a visualization of 512 generated blue samples points, together with red data points drawn from the true distribution. Vanilla GAN generates data points around only a single valid mode of the data distribution. D2GAN and \PG{} distribute data around all eight mixture components, demonstrating the ability to resolve modal collapse in this case.

\subsection{Experiments on Real Image Datasets}
We tested the proposed \PG{} and baseline methods on MNIST \cite{MNIST}, FashionMNIST~\cite{FashionMNST}, CIFAR-10~\cite{Cifar-10}, STL-10~\cite{STL-10}, CelebA~\cite{CelebA} and VGGFace2~\cite{VGGFace2}. For quantitative evaluation, we adopt Fr\'echet Inception Distance (FID) \cite{FID} and Inception score(IS) \cite{IS} as the evaluation metric. FID summarizes the distance between the Inception features of the generated images and the real images.  A lower FID indicates both better accuracy and higher diversity, so that a batch of generated images with good accuracy but identical to each other will have a poor FID score. A higher IS score indicates a higher generated image quality. 

\paragraph{Baseline Methods}

We reproduce/report the performance of a list of existing baseline methods, including: DCGAN \cite{DCGAN}, D2GAN \cite{D2GAN}, WGAN \cite{iwgan}, DRAGAN \cite{dragan}, LSGAN \cite{lsgan}, MicroBatchGAN \cite{microbatchgan}, Dist-GAN \cite{dist-gan}, PresGAN \cite{dieng2019prescribed}, and QSNGAN \cite{grassucci2021quaternion}. We used the same generator and discriminator backbone for all the comparison methods in each dataset unless specified by the original author. We recorded the best performing checkpoints when evaluating each method.

\paragraph{Grey-Scale Images}
MNIST~\cite{MNIST} and FashionMNIST~\cite{FashionMNST} are small grey-scale image datasets including 60,000 training and 10,000 testing 28$\times$28 gray-scale images of hand-written digits and clothing. Since they are of small-scale, we adopt the shallow version of the generator and discriminators to generate the grey-scale images. We firstly give the performance comparisons between \PG{} and baseline methods that only adopted the Inception score in the original paper. We then include a comprehensive comparison via FID score in Table \ref{tab: Fid result}. And the first two columns in Table \ref{tab: Fid result} show our method has the best FID score among all tested methods. Figure \ref{generated images} (left) shows FashionMNIST image results.

\begin{table}[H]
\centering
\caption{Inception score results of CIFAR-10 and STL-10.}\label{tab: IS result}
 \begin{tabular}{l|cc}
\hline
              &CIFAR10   &  STL-10 \\ \hline
WGAN   &3.82 &3.97 \\
GAN	   & 2.61	& 2.17 \\
MicroBatchGAN  &6.77 &\textbf{7.23} \\
DCGAN    & 6.40 &   5.87\\ 
D2GAN     & 7.15 &  6.15\\ 
\textbf{\PG{} (ours)}            &\textbf{7.45}  & 6.22\\ \hline
\end{tabular}  
\end{table} 
{
\setlength{\tabcolsep}{2pt}
\begin{table}[H]
\centering
\caption{ Experiment FID score results of grey-scale image dataset: MNIST and FashionMNIST; natural scene image dataset: CIFAR-10 and STL-10; human face image dataset: CelebA and VGGFace2. Baseline results denoted with (*) were extracted from the original paper report, not independently run in our experiments.}
{
\begin{tabular}{l|cccccc}
\hline
            & MNIST    & FasionMNIST   &CIFAR10   & \:\:\:\:STL-10\:\:\:\:\: &\:\:CelebA\:\:   &\:\:\:\:\:\:VGG\:\:\:\:\:\:\\ \hline

DCGAN \cite{DCGAN}      & 19.86 & 24.78 & 27.45 & 59.79 & 17.38 &   49.99\\ 
WGAN* \cite{iwgan}	& 14.07	& 28.24	 & 35.37	& 60.21 & 15.23	& 39.24 \\
DRAGAN \cite{dragan}	& 66.96	& 62.64	& 36.49	& 91.07	& 14.57	& 50.20\\
D2GAN \cite{D2GAN}      & 22.20 & 29.33  & 27.38 & 54.12  &17.30 &  20.67\\ 
Dist-GAN* \cite{dist-gan} &-- &-- &22.95 &\textbf{36.19} &23.7 &-- \\
PresGAN* \cite{dieng2019prescribed} & 42.02 & -- & 52.20 & -- & -- & -- \\
LSGAN \cite{lsgan}	& 23.80 	& 43.00 & 51.42	& 70.37 & 15.35	& 55.96 \\
MicroBatchGAN* \cite{microbatchgan} & 17.10 & --	& 77.70	& -- & 34.50 & -- \\
QSNGAN* \cite{grassucci2021quaternion} & -- & -- & 31.97  & 59.61 & -- & -- \\
\textbf{\PG{} (ours)}    & \textbf{\ \  7.87} & \textbf{21.73} & \textbf{21.55} &51.37 &\textbf{13.95}  & \textbf{19.05}\\ \hline
\end{tabular}
\label{tab: Fid result}
}
\end{table}
}
\paragraph{Natural Scene Images}
CIFAR-10~\cite{Cifar-10} and STL-10~\cite{STL-10} are natural scene RGB image datasets. CIFAR-10 includes 50,000 training and 10,000 testing 32$\times$32  images with ten unique categories: airplane, automobile, bird, cat, deer, dog, frog, horse, ship, and truck. STL-10 is sub-sampled from ImageNet, and has more diverse samples than CIFAR-10, containing about 100,000 96$\times$96 images. We adopt the deep version of the generator and discriminator to generate 32$\times$32 RGB images. Table \ref{tab: Fid result} middle two columns show FID score results and Table \ref{tab: IS result} shows the inception score results. Note that the introduce of competitive Duel Game in two discriminator GAN setup, brings performance boost in all the experiments. Figure \ref{generated images} (middle) shows STL-10 image results.

\begin{figure*}[!htb]
  \centering
\includegraphics[width=0.32\textwidth]{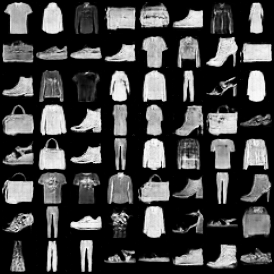}
\includegraphics[width=0.32\textwidth]{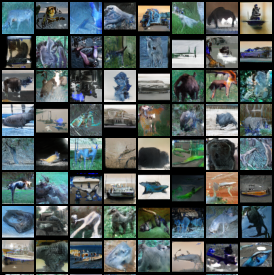}
\includegraphics[width=0.32\textwidth]{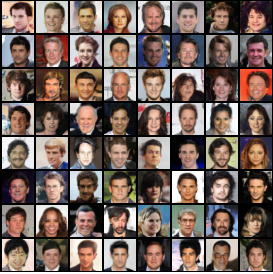}
\caption{Image results generated by proposed \PG{}. Left: FashionMNIST, grey-scale clothing images; Middle: STL-10, natural scene images; Right: CelebA, large-scale celebrate face images.}\label{generated images}
\end{figure*}
\paragraph{Human Face Images}
CelebA~\cite{CelebA} and VGGFace2~\cite{VGGFace2} are large-scale face datasets. CelebA includes 162,770 training and 19,962 testing images of celebrity faces. VGGFace2 contains more than 3.3 million face images of celebrities caught in the `wild'. There are different lighting conditions, emotions, and viewing angles. We randomly choose 200 categories from VGGFace2 and trained on the reduced dataset. We adopt the deep version of the generator and discriminators to generate 32$\times$32 RGB images on CelebA and 64$\times$64 RGB images on VGGFace2. Table \ref{tab: Fid result} last two columns show our method has the best FID score among tested methods. Figure \ref{generated images} (right) shows CelebA image results.

\paragraph{Implementation Details} 
Our model architecture adopts the same generator and discriminator backbone as DCGAN~\cite{DCGAN}.  In \PG{}, the newly introduced discriminator is a duplicate of the first one. \PG{} achieves low FID scores and high IS scores when $\alpha$ and $\beta$ are simply set to constant values. However we found that we could obtain an approximately 10\% improvement through dynamic tuning. The parameter $\beta$ controls the overall weight of $\dd$, while $\alpha$ punishes the condition when $D_1$ over-agrees with $D_2$. In the early training phase, when we have an unstable generator and discriminator, we set $\alpha$ and $\beta$ to 0. As training progresses, we gradually increase these parameters to a max value, which helps with vanishing gradients. After the midpoint of training we decrease these parameters to help the discriminators converge, until the parameters reach approximately 0 at the end of the training process. We adopt 0.3, 0.5 as the max value for $\alpha$ and $\beta$, respectively.

\subsection{Duel Game as a Regularizer}
Intuitively, the introduced duel game could be well applied to a large family of GAN variants defined w.r.t a single discriminator $D_1$ and a generator $G$. This is due to the fact that Eqn.(\ref{equ:peergan}) could be denoted by:
\begin{tcolorbox}[colback=grey!10!white,colframe=grey!10!white]
\begin{align}
    &\min_{G} \max_{D_1, D_2}\mathcal{L}(D_1, D_2, G)=\min_{G} \max_{D_1, D_2}\left[ \text{GAN}(D_1)+\beta \cdot \Cline{\dd}+\text{GAN}(D_2)\right],
\end{align}
\end{tcolorbox}
where $\text{GAN}(D_i):=\E_{x\thicksim p_{\text{data}}}\left[\log D_i(x)\right]+\E_{z\thicksim p_{z}}\left[\log \left(1-D_i\left(G(z)\right)\right)\right]$. Thus, if we substitute the GAN loss $\text{GAN}(D_i)$ by a state-of-the-art GAN variant, i.e., StyleGAN-ADA \cite{karras2020training}, one could view the duel game $\dd$ as a regularizer. 

We take the higher resolution version (256$\times$256 RGB images) of CelebA \cite{CelebA} for illustration. Clearly in Table \ref{tab:celeba_high}, StyleGAN-ADA reaches the state-of-the-art result on this task. And the introduced $\dd$ regularizer could further improve its performance. Figure \ref{fig:celeba_high} shows the corresponding generated images.
 
\begin{table*}[t]
\centering
\caption{Experiment FID score results of CelebA (256$\times$256 RGB images). Baseline results denoted with (*) were obtained from the original paper report.}
{\scalebox{0.93}{\begin{tabular}{c|c|c|c|c|c|c}
\hline
     Method & GLF* \cite{xiao2019generative}  & MSP* \cite{li2020latent}  & NCP-VAE* \cite{aneja2021contrastive}  & LSGM* \cite{vahdat2021score} & StyleGAN-ADA \cite{karras2020training} & StyleGAN-ADA+$\dd$  \\ \hline
    FID & 41.80  &  35.00 & 24.79  & 7.22 &  4.85& \textbf{4.32} \\ \hline
\end{tabular}}}\label{tab:celeba_high}
\end{table*}

\begin{figure*}[!htb]
  \centering
\includegraphics[width=\textwidth,trim={0 27.3cm 0 0},clip]{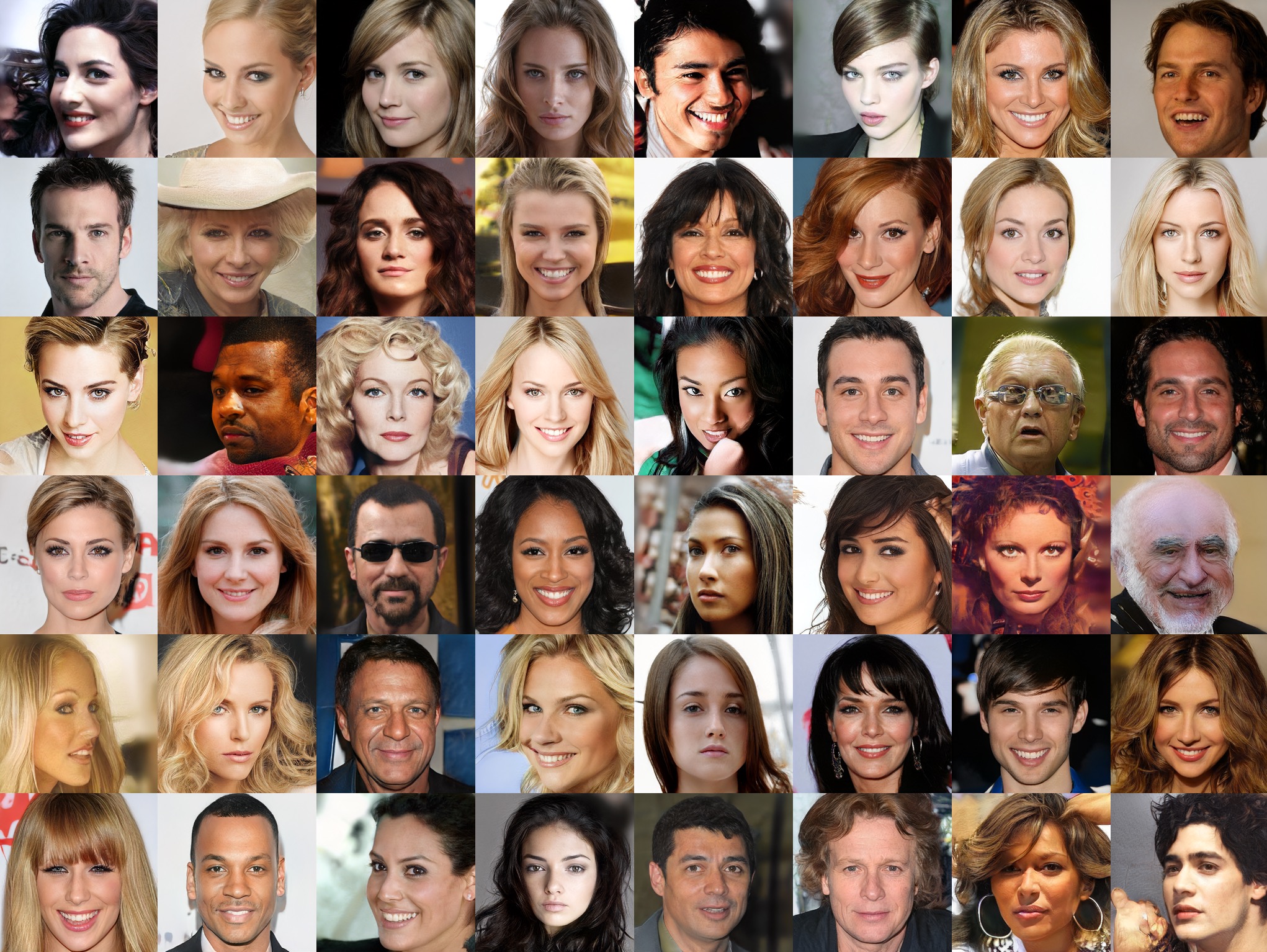} 
\caption{Image results generated by proposed \PG{}. (Trained on CelebA 256$\times$256 RGB images. More generated images are deferred to the Appendix \ref{app:exp}.)}\label{fig:celeba_high}
\end{figure*}

\paragraph{More Experiment Results} 
We defer more experiment results to the Appendix \ref{app:exp}, including: an ablation study of hyper-parameters tuning; experiment validations about the stability of training; the visualization of the duel game between $D_1$ and $D_2$.

\section{Conclusion}
We propose \PG{} which introduces a peer discriminator to Vanilla GAN. The role of the peer discriminator is to allow an intermediate game (duel game) between discriminators. Theoretical analysis demonstrates that the introduced duel game incentivizes incremental improvement, addresses vanishing gradients and mode collapse issues, punishes over-agreements among discriminators and is stable with diverged peer discriminator. Experimental results on a synthetic dataset and multiple real world datasets validate that \PG{} produces high quality images, with lower error than competing techniques.

\clearpage
\newpage
\bibliographystyle{plain}
\bibliography{gan, references, exp}

\begin{thebibliography}{10}

\bibitem{hgan}
Isabela Albuquerque, Jo{\~a}o Monteiro, Thang Doan, Breandan Considine, Tiago
  Falk, and Ioannis Mitliagkas.
\newblock Multi-objective training of generative adversarial networks with
  multiple discriminators.
\newblock {\em arXiv preprint arXiv:1901.08680}, 2019.

\bibitem{aneja2021contrastive}
Jyoti Aneja, Alex Schwing, Jan Kautz, and Arash Vahdat.
\newblock A contrastive learning approach for training variational autoencoder
  priors.
\newblock {\em Advances in Neural Information Processing Systems}, 34, 2021.

\bibitem{face_aging}
Grigory Antipov, Moez Baccouche, and Jean-Luc Dugelay.
\newblock Face aging with conditional generative adversarial networks.
\newblock In {\em 2017 IEEE international conference on image processing
  (ICIP)}, pages 2089--2093. IEEE, 2017.

\bibitem{mmdgan}
Michael Arbel, Dougal Sutherland, Miko{\l}aj Bi{\'n}kowski, and Arthur Gretton.
\newblock On gradient regularizers for mmd gans.
\newblock In {\em Advances in neural information processing systems}, pages
  6700--6710, 2018.

\bibitem{arjovsky2017wasserstein}
Martin Arjovsky, Soumith Chintala, and L{\'e}on Bottou.
\newblock Wasserstein gan.
\newblock {\em arXiv preprint arXiv:1701.07875}, 2017.

\bibitem{biggan}
Andrew Brock, Jeff Donahue, and Karen Simonyan.
\newblock Large scale gan training for high fidelity natural image synthesis.
\newblock {\em arXiv preprint arXiv:1809.11096}, 2018.

\bibitem{VGGFace2}
Q.~Cao, L.~Shen, W.~Xie, O.~M. Parkhi, and A.~Zisserman.
\newblock Vggface2: A dataset for recognising faces across pose and age.
\newblock In {\em International Conference on Automatic Face and Gesture
  Recognition}, 2018.

\bibitem{infogan}
Xi~Chen, Yan Duan, Rein Houthooft, John Schulman, Ilya Sutskever, and Pieter
  Abbeel.
\newblock Infogan: Interpretable representation learning by information
  maximizing generative adversarial nets.
\newblock In {\em Advances in neural information processing systems}, pages
  2172--2180, 2016.

\bibitem{STL-10}
Adam Coates, Andrew Ng, and Honglak Lee.
\newblock {An Analysis of Single Layer Networks in Unsupervised Feature
  Learning}.
\newblock In {\em AISTATS}, 2011.
\newblock
  \url{https://cs.stanford.edu/~acoates/papers/coatesleeng_aistats_2011.pdf}.

\bibitem{tacgan}
Ayushman Dash, John Cristian~Borges Gamboa, Sheraz Ahmed, Marcus Liwicki, and
  Muhammad~Zeshan Afzal.
\newblock Tac-gan-text conditioned auxiliary classifier generative adversarial
  network.
\newblock {\em arXiv preprint arXiv:1703.06412}, 2017.

\bibitem{dieng2019prescribed}
Adji~B Dieng, Francisco~JR Ruiz, David~M Blei, and Michalis~K Titsias.
\newblock Prescribed generative adversarial networks.
\newblock {\em arXiv preprint arXiv:1910.04302}, 2019.

\bibitem{gman}
Ishan Durugkar, Ian Gemp, and Sridhar Mahadevan.
\newblock Generative multi-adversarial networks.
\newblock {\em arXiv preprint arXiv:1611.01673}, 2016.

\bibitem{madgan}
Arnab Ghosh, Viveka Kulharia, Vinay~P Namboodiri, Philip~HS Torr, and Puneet~K
  Dokania.
\newblock Multi-agent diverse generative adversarial networks.
\newblock In {\em Proceedings of the IEEE conference on computer vision and
  pattern recognition}, pages 8513--8521, 2018.

\bibitem{autogan}
Xinyu Gong, Shiyu Chang, Yifan Jiang, and Zhangyang Wang.
\newblock Autogan: Neural architecture search for generative adversarial
  networks.
\newblock In {\em Proceedings of the IEEE International Conference on Computer
  Vision}, pages 3224--3234, 2019.

\bibitem{gan}
Ian Goodfellow, Jean Pouget-Abadie, Mehdi Mirza, Bing Xu, David Warde-Farley,
  Sherjil Ozair, Aaron Courville, and Yoshua Bengio.
\newblock Generative adversarial nets.
\newblock In {\em Advances in neural information processing systems}, pages
  2672--2680, 2014.

\bibitem{grassucci2021quaternion}
Eleonora Grassucci, Edoardo Cicero, and Danilo Comminiello.
\newblock Quaternion generative adversarial networks.
\newblock {\em arXiv preprint arXiv:2104.09630}, 2021.

\bibitem{iwgan}
Ishaan Gulrajani, Faruk Ahmed, Martin Arjovsky, Vincent Dumoulin, and Aaron~C
  Courville.
\newblock Improved training of wasserstein gans.
\newblock In {\em Advances in neural information processing systems}, pages
  5767--5777, 2017.

\bibitem{FID}
Martin Heusel, Hubert Ramsauer, Thomas Unterthiner, Bernhard Nessler, and Sepp
  Hochreiter.
\newblock Gans trained by a two time-scale update rule converge to a local nash
  equilibrium.
\newblock In {\em Advances in neural information processing systems}, pages
  6626--6637, 2017.

\bibitem{mgan}
Quan Hoang, Tu~Dinh Nguyen, Trung Le, and Dinh Phung.
\newblock Multi-generator generative adversarial nets.
\newblock {\em arXiv preprint arXiv:1708.02556}, 2017.

\bibitem{sgan}
Xun Huang, Yixuan Li, Omid Poursaeed, John Hopcroft, and Serge Belongie.
\newblock Stacked generative adversarial networks.
\newblock In {\em Proceedings of the IEEE conference on computer vision and
  pattern recognition}, pages 5077--5086, 2017.

\bibitem{cartoongan}
Yanghua Jin, Jiakai Zhang, Minjun Li, Yingtao Tian, Huachun Zhu, and Zhihao
  Fang.
\newblock Towards the automatic anime characters creation with generative
  adversarial networks.
\newblock {\em arXiv preprint arXiv:1708.05509}, 2017.

\bibitem{progan}
Tero Karras, Timo Aila, Samuli Laine, and Jaakko Lehtinen.
\newblock Progressive growing of gans for improved quality, stability, and
  variation.
\newblock {\em arXiv preprint arXiv:1710.10196}, 2017.

\bibitem{karras2020training}
Tero Karras, Miika Aittala, Janne Hellsten, Samuli Laine, Jaakko Lehtinen, and
  Timo Aila.
\newblock Training generative adversarial networks with limited data.
\newblock {\em Advances in Neural Information Processing Systems},
  33:12104--12114, 2020.

\bibitem{karras2019style}
Tero Karras, Samuli Laine, and Timo Aila.
\newblock A style-based generator architecture for generative adversarial
  networks.
\newblock In {\em Proceedings of the IEEE/CVF conference on computer vision and
  pattern recognition}, pages 4401--4410, 2019.

\bibitem{dragan}
Naveen Kodali, Jacob Abernethy, James Hays, and Zsolt Kira.
\newblock On convergence and stability of gans.
\newblock {\em arXiv preprint arXiv:1705.07215}, 2017.

\bibitem{Cifar-10}
Alex Krizhevsky and Geoff Hinton.
\newblock Convolutional deep belief networks on cifar-10.
\newblock {\em Unpublished manuscript}, 40(7):1--9, 2010.

\bibitem{MNIST}
Yann LeCun and Corinna Cortes.
\newblock {MNIST} handwritten digit database.
\newblock 2010.

\bibitem{super_resolution}
Christian Ledig, Lucas Theis, Ferenc Husz{\'a}r, Jose Caballero, Andrew
  Cunningham, Alejandro Acosta, Andrew Aitken, Alykhan Tejani, Johannes Totz,
  Zehan Wang, et~al.
\newblock Photo-realistic single image super-resolution using a generative
  adversarial network.
\newblock In {\em Proceedings of the IEEE conference on computer vision and
  pattern recognition}, pages 4681--4690, 2017.

\bibitem{li2020latent}
Xiao Li, Chenghua Lin, Ruizhe Li, Chaozheng Wang, and Frank Guerin.
\newblock Latent space factorisation and manipulation via matrix subspace
  projection.
\newblock In {\em International Conference on Machine Learning}, pages
  5916--5926. PMLR, 2020.

\bibitem{face_completion}
Yijun Li, Sifei Liu, Jimei Yang, and Ming-Hsuan Yang.
\newblock Generative face completion.
\newblock In {\em Proceedings of the IEEE conference on computer vision and
  pattern recognition}, pages 3911--3919, 2017.

\bibitem{CelebA}
Ziwei Liu, Ping Luo, Xiaogang Wang, and Xiaoou Tang.
\newblock Large-scale celebfaces attributes (celeba) dataset.
\newblock {\em Retrieved August}, 15(2018):11, 2018.

\bibitem{pose}
Liqian Ma, Xu~Jia, Qianru Sun, Bernt Schiele, Tinne Tuytelaars, and Luc
  Van~Gool.
\newblock Pose guided person image generation.
\newblock In {\em Advances in neural information processing systems}, pages
  406--416, 2017.

\bibitem{mao2017least}
Xudong Mao, Qing Li, Haoran Xie, Raymond~YK Lau, Zhen Wang, and Stephen
  Paul~Smolley.
\newblock Least squares generative adversarial networks.
\newblock In {\em Proceedings of the IEEE international conference on computer
  vision}, pages 2794--2802, 2017.

\bibitem{UnrolledGAN}
Luke Metz, Ben Poole, David Pfau, and Jascha Sohl-Dickstein.
\newblock Unrolled generative adversarial networks.
\newblock {\em arXiv preprint arXiv:1611.02163}, 2016.

\bibitem{sngan}
Takeru Miyato, Toshiki Kataoka, Masanori Koyama, and Yuichi Yoshida.
\newblock Spectral normalization for generative adversarial networks.
\newblock {\em arXiv preprint arXiv:1802.05957}, 2018.

\bibitem{microbatchgan}
Gon{\c{c}}alo Mordido, Haojin Yang, and Christoph Meinel.
\newblock microbatchgan: Stimulating diversity with multi-adversarial
  discrimination.
\newblock In {\em Proceedings of the IEEE/CVF Winter Conference on Applications
  of Computer Vision}, pages 3061--3070, 2020.

\bibitem{D2GAN}
Tu~Nguyen, Trung Le, Hung Vu, and Dinh Phung.
\newblock Dual discriminator generative adversarial nets.
\newblock In {\em Advances in Neural Information Processing Systems}, pages
  2670--2680, 2017.

\bibitem{f_gan}
Sebastian Nowozin, Botond Cseke, and Ryota Tomioka.
\newblock f-gan: Training generative neural samplers using variational
  divergence minimization.
\newblock In {\em Advances in neural information processing systems}, pages
  271--279, 2016.

\bibitem{impainting1}
Deepak Pathak, Philipp Krahenbuhl, Jeff Donahue, Trevor Darrell, and Alexei~A
  Efros.
\newblock Context encoders: Feature learning by inpainting.
\newblock In {\em Proceedings of the IEEE conference on computer vision and
  pattern recognition}, pages 2536--2544, 2016.

\bibitem{image_edit}
Guim Perarnau, Joost Van De~Weijer, Bogdan Raducanu, and Jose~M {\'A}lvarez.
\newblock Invertible conditional gans for image editing.
\newblock {\em arXiv preprint arXiv:1611.06355}, 2016.

\bibitem{lsgan}
Guo-Jun Qi.
\newblock Loss-sensitive generative adversarial networks on lipschitz
  densities.
\newblock {\em International Journal of Computer Vision}, 128(5):1118--1140,
  2020.

\bibitem{DCGAN}
Alec Radford, Luke Metz, and Soumith Chintala.
\newblock Unsupervised representation learning with deep convolutional
  generative adversarial networks.
\newblock {\em arXiv preprint arXiv:1511.06434}, 2015.

\bibitem{IS}
Tim Salimans, Ian Goodfellow, Wojciech Zaremba, Vicki Cheung, Alec Radford, and
  Xi~Chen.
\newblock Improved techniques for training gans.
\newblock {\em arXiv preprint arXiv:1606.03498}, 2016.

\bibitem{gradientgan}
Yang Song and Stefano Ermon.
\newblock Generative modeling by estimating gradients of the data distribution.
\newblock In {\em Advances in Neural Information Processing Systems}, pages
  11918--11930, 2019.

\bibitem{emoji}
Yaniv Taigman, Adam Polyak, and Lior Wolf.
\newblock Unsupervised cross-domain image generation.
\newblock {\em arXiv preprint arXiv:1611.02200}, 2016.

\bibitem{distgan}
Ngoc-Trung Tran, Tuan-Anh Bui, and Ngai-Man Cheung.
\newblock Dist-gan: An improved gan using distance constraints.
\newblock In {\em Proceedings of the European Conference on Computer Vision
  (ECCV)}, pages 370--385, 2018.

\bibitem{dist-gan}
Ngoc-Trung Tran, Tuan-Anh Bui, and Ngai-Man Cheung.
\newblock Dist-gan: An improved gan using distance constraints.
\newblock In {\em Proceedings of the European Conference on Computer Vision
  (ECCV)}, pages 370--385, 2018.

\bibitem{msgan}
Ngoc-Trung Tran, Viet-Hung Tran, Bao-Ngoc Nguyen, Linxiao Yang, and
  Ngai-Man~Man Cheung.
\newblock Self-supervised gan: Analysis and improvement with multi-class
  minimax game.
\newblock {\em Advances in Neural Information Processing Systems},
  32:13253--13264, 2019.

\bibitem{vahdat2021score}
Arash Vahdat, Karsten Kreis, and Jan Kautz.
\newblock Score-based generative modeling in latent space.
\newblock {\em Advances in Neural Information Processing Systems}, 34, 2021.

\bibitem{videos}
Carl Vondrick, Hamed Pirsiavash, and Antonio Torralba.
\newblock Generating videos with scene dynamics.
\newblock In {\em Advances in neural information processing systems}, pages
  613--621, 2016.

\bibitem{resolutiongan}
Ting-Chun Wang, Ming-Yu Liu, Jun-Yan Zhu, Andrew Tao, Jan Kautz, and Bryan
  Catanzaro.
\newblock High-resolution image synthesis and semantic manipulation with
  conditional gans.
\newblock In {\em Proceedings of the IEEE conference on computer vision and
  pattern recognition}, pages 8798--8807, 2018.

\bibitem{stablegan_survey}
Maciej Wiatrak, Stefano~V Albrecht, and Andrew Nystrom.
\newblock Stabilizing generative adversarial networks: A survey.
\newblock {\em arXiv preprint arXiv:1910.00927}, 2019.

\bibitem{gpgan}
Huikai Wu, Shuai Zheng, Junge Zhang, and Kaiqi Huang.
\newblock Gp-gan: Towards realistic high-resolution image blending.
\newblock In {\em Proceedings of the 27th ACM International Conference on
  Multimedia}, pages 2487--2495, 2019.

\bibitem{FashionMNST}
Han Xiao, Kashif Rasul, and Roland Vollgraf.
\newblock Fashion-mnist: a novel image dataset for benchmarking machine
  learning algorithms.
\newblock {\em arXiv preprint arXiv:1708.07747}, 2017.

\bibitem{xiao2019generative}
Zhisheng Xiao, Qing Yan, and Yali Amit.
\newblock Generative latent flow.
\newblock {\em arXiv preprint arXiv:1905.10485}, 2019.

\bibitem{impainting2}
Raymond~A Yeh, Chen Chen, Teck Yian~Lim, Alexander~G Schwing, Mark
  Hasegawa-Johnson, and Minh~N Do.
\newblock Semantic image inpainting with deep generative models.
\newblock In {\em Proceedings of the IEEE conference on computer vision and
  pattern recognition}, pages 5485--5493, 2017.

\bibitem{stackgan}
Han Zhang, Tao Xu, Hongsheng Li, Shaoting Zhang, Xiaogang Wang, Xiaolei Huang,
  and Dimitris~N Metaxas.
\newblock Stackgan: Text to photo-realistic image synthesis with stacked
  generative adversarial networks.
\newblock In {\em Proceedings of the IEEE international conference on computer
  vision}, pages 5907--5915, 2017.

\bibitem{crgan}
Han Zhang, Zizhao Zhang, Augustus Odena, and Honglak Lee.
\newblock Consistency regularization for generative adversarial networks.
\newblock {\em arXiv preprint arXiv:1910.12027}, 2019.

\bibitem{cyclegan}
Jun-Yan Zhu, Taesung Park, Phillip Isola, and Alexei~A Efros.
\newblock Unpaired image-to-image translation using cycle-consistent
  adversarial networks.
\newblock In {\em Proceedings of the IEEE international conference on computer
  vision}, pages 2223--2232, 2017.

\end{thebibliography}

\newpage
\onecolumn
\appendix

\newpage
\begin{center}
    \section*{\Large Appendix}
\end{center}
The appendix is organized as follows:
\squishlist
    \item Section A includes the omitted proofs for all theoretical conclusions in the main paper.
    \item Section B includes experiment details and additional experiment results.
\squishend

\section{Omitted Proofs}

\subsection{Proof of Proposition \ref{prop: opt_d}}\label{app:p1}
We firstly introduce Lemma 1 which helps with the proof of Proposition \ref{prop: opt_d}.

\begin{lemma}\label{lm:basic}
For any $(a,b)\in \mathbb{R}^2\setminus \{0, 0\}$, the function $y\rightarrow a\log(y)+b\log(1-y)$ achieves its maximum in $[0,1]$ at $\frac{a}{a+b}$.
\end{lemma}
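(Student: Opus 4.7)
}
The plan is to treat this as a standard one-variable calculus exercise: locate the critical point of $f(y):=a\log(y)+b\log(1-y)$ on the open interval $(0,1)$, check it lies in $[0,1]$, and confirm via the second derivative that it is indeed the maximum. I would implicitly assume $a,b\ge 0$ (not both zero), since otherwise $f$ is unbounded above near an endpoint and no finite maximum on $[0,1]$ exists; this matches the way the lemma is used in the proof of Proposition \ref{prop: opt_d}, where $a$ and $b$ are densities.

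First I would handle the generic case $a>0$ and $b>0$. On $(0,1)$ the function $f$ is differentiable with
\[
f'(y)=\frac{a}{y}-\frac{b}{1-y},\qquad f''(y)=-\frac{a}{y^{2}}-\frac{b}{(1-y)^{2}}.
\]
Setting $f'(y)=0$ gives $a(1-y)=b\,y$, i.e.\ $y^{*}=\frac{a}{a+b}\in(0,1)$. Since $f''(y)<0$ on $(0,1)$, the function $f$ is strictly concave, so $y^{*}$ is the unique interior critical point and hence the global maximum on $[0,1]$ (the boundary values are $-\infty$).

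Next I would dispose of the boundary cases. If $a=0$ and $b>0$, then $f(y)=b\log(1-y)$ is strictly decreasing on $[0,1)$ and maximized at $y=0=\tfrac{a}{a+b}$; symmetrically, if $b=0$ and $a>0$, the maximum is at $y=1=\tfrac{a}{a+b}$. In both boundary cases the formula $y^{*}=\tfrac{a}{a+b}$ still returns the correct maximizer. There is no serious obstacle here — the only subtle point is making explicit that the statement implicitly assumes non-negativity of $a$ and $b$; otherwise an adversarial sign makes $f$ blow up to $+\infty$ at $0$ or $1$ and the claim fails. I would note this assumption once and then proceed with the two-line calculation above.
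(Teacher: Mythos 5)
Your proposal is correct and follows essentially the same route as the paper: the paper also sets $f(y)=a\log(y)+b\log(1-y)$, notes the endpoints give $-\infty$, solves $f'(y)=0$ to get $y=\frac{a}{a+b}$, and concludes via the sign of $f'$ (where you instead invoke strict concavity via $f''<0$, an equivalent one-line variation). Your extra remarks — that $a,b\ge 0$ must be implicitly assumed (the stated $(a,b)\in\mathbb{R}^2\setminus\{0,0\}$ is too broad) and the treatment of the boundary cases $a=0$ or $b=0$ — are careful additions the paper's proof glosses over, but they do not change the substance of the argument.
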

\begin{proof}
Denote by $f(y):=a\log(y)+b\log(1-y)$, clearly, when $y=0$ or $y=1$, $f(y)=-\infty$. For $y\in (0,1)$, we have:
\begin{align}
    f'(y)=0\Longleftrightarrow \frac{a}{y}-\frac{b}{1-y}=0 \Longleftrightarrow y=\frac{a}{a+b}.
\end{align}
Note that $f'(y)>0$ if $0<y<\frac{a}{a+b}$ and $f'(y)<0$ if $1>y>\frac{a}{a+b}$. Thus, the maximum of $f(y)$ should be $\max(f(a), f(\frac{a}{a+b}), f(b))=f(\frac{a}{a+b})$. And $f(y)$ achieves its maximum in $[0,1]$ at $\frac{a}{a+b}$.
\end{proof}
Now we proceed to prove Proposition \ref{prop: opt_d}. 
\paragraph{\textbf{Proof of Proposition \ref{prop: opt_d}}}
\begin{proof}
The trainer criterion for the discriminator $D_i$, given any generator $G$, is to maximize the quantity $\mathcal{L}(D_1, D_2,G)$. Remember that:
\begin{align}\label{eqn:19}
    \mathcal{L}(D_1, D_2, G)
    &= \E_{x\thicksim p_{\text{data}}}\left[\log D_1(x)\right]+\E_{x\thicksim p_{\text{data}}}\left[\log D_2(x)\right]\notag\\
    &+\E_{z\thicksim p_{z}}\left[\log \left(1-D_1\left(G(z)\right)\right)\right]+\E_{z\thicksim p_{z}}\left[\log \left(1-D_2\left(G(z)\right)\right)\right]\notag\\
    &+\beta \cdot \E_{x\thicksim \pd}\Bigg[\ell\Big(D_1(x),\BR\big(D_2(x)>\dfrac{1}{2}\big)\Big)-\alpha \cdot \ell\Big(D_1(x_{p_1}),\BR\big(D_2(x_{p_2})>\dfrac{1}{2}\big)\Big)\Bigg]\notag\\
     &+\beta \cdot \E_{x\thicksim \pd}\Bigg[\ell\Big(D_2(x),\BR\big(D_1(x)>\dfrac{1}{2}\big)\Big)-\alpha \cdot \ell\Big(D_2(x_{p_1}),\BR\big(D_1(x_{p_2})>\dfrac{1}{2}\big)\Big)\Bigg].
\end{align}
We then have:
{
\begin{align}
    \begin{split}
    \text{Eqn.}(\ref{eqn:19})&= \int_{x}p_{\text{data}}(x)\big[\log\big(D_1(x)\big)+\log\big(D_2(x)\big)\big]dx+\int_{z}p_{z}(z)\Big[\log\Big(1-D_1\big(G(z)\big)\Big)+\log\Big(1-D_2\big(G(z)\big)\Big)\Big]dz\\
    &+\beta \cdot \int_{x} \pd(x)\big(r_{2,G}(x)-\alpha \cdot p_{2,G}\big)\cdot \log\big(D_1(x)\big)dx+\beta \cdot \int_{x} \pd(x)\big(r_{1,G}(x)-\alpha \cdot p_{1,G}\big)\cdot \log\big(D_2(x)\big)\big]dx\\
    &+\beta\cdot \int_{x} \pd(x)\big(1-\alpha-r_{2,G}(x)+\alpha \cdot p_{2,G}\big)\cdot \log\big(1-D_1(x)\big)dx\\
    &+\beta \cdot \int_{x} \pd(x)\big(1-\alpha-r_{1,G}(x)+\alpha \cdot p_{1,G}\big)\cdot \log\big(1-D_2(x)\big)dx\\
    &= \int_{x}p_{\text{data}}(x)\big[\log\big(D_1(x)\big)+\log\big(D_2(x)\big)\big]dx+\int_{x}p_{g}(x)\big[\log\big(1-D_1(x)\big)+\log\big(1-D_2(x)\big)\big]dx\\
    &+\beta \cdot \int_{x} \pd(x)\big(r_{2,G}(x)-\alpha \cdot p_{2,G}\big)\cdot \log\big(D_1(x)\big)dx+\beta \cdot \int_{x} \pd(x)\big(r_{1,G}(x)-\alpha \cdot p_{1,G}\big)\cdot \log\big(D_2(x)\big)dx\\
    &+\beta\cdot \int_{x} \pd(x)\big(1-\alpha-r_{2,G}(x)+\alpha \cdot p_{2,G}\big)\cdot \log\big(1-D_1(x)\big)dx\\
    &+\beta \cdot \int_{x} \pd(x)\big(1-\alpha-r_{1,G}(x)+\alpha \cdot p_{1,G}\big)\cdot \log\big(1-D_2(x)\big)dx\\
    &= \int_{x} \big[p_{\text{data}}(x)+\beta \cdot \big(r_{2,G}(x)-\alpha \cdot p_{2,G}\big)\cdot \pd(x)\big] \cdot\log\big(D_1(x)\big) dx\\
    &+ \int_{x} \big[p_{g}(x)+\beta \cdot \big(1-\alpha-r_{2,G}(x)+\alpha \cdot p_{2,G}\big)\cdot \pd(x)\big] \cdot\log(1-D_1(x)) dx\\
    &+ \int_{x} \big[p_{\text{data}}(x)+\beta \cdot \big(r_{1,G}(x)-\alpha \cdot p_{1,G}\big)\cdot \pd(x)\big] \cdot\log\big(D_2(x)\big) dx\\
    &+ \int_{x} \big[p_{g}(x)+\beta \cdot \big(1-\alpha-r_{1,G}(x)+\alpha \cdot p_{1,G}\big)\cdot \pd(x)\big] \cdot\log\big(1-D_2(x)\big) dx.\\
    \end{split}
\end{align}}
For $D_1, D_2$, according to Lemma \ref{lm:basic}, the above objective function respectively achieves its maximum in $[0, 1], [0, 1]$ at: 
\begin{align}
D_{i,G}^*(x)=\dfrac{p_{\text{data}}(x)+\beta \cdot (r_{j,G}(x)-\alpha \cdot p_{j, G})\cdot \pd(x)}{p_{\text{data}}(x)+p_{g}(x)+\beta \cdot (1-\alpha)\cdot \pd(x)}, \qquad i\neq j.
\end{align}
With the introduce of Duel Game, the distributions $p_{\text{data}}$ and $p_{g}$ in the Vanilla GAN got changed due to the appearance of $\pd$. Thus, we define the corresponding updated distributions in \PG{} w.r.t. discriminator $D_i$ as $p_{\text{data}_i}$ and $p_{g_i}$, respectively:
\begin{align}
\label{eqn:new_dis}
    &p_{\text{data}_i}(x) :=\dfrac{p_{\text{data}}(x)+\beta\cdot \hat{r}^*_{j,G}(x)\cdot \pd(x)}{\int_{x}p_{\text{data}}(x)+\beta\cdot \hat{r}^*_{j,G}(x)\cdot \pd(x)dx},\\
    &p_{g_i}(x) :=\dfrac{p_{g}(x)+\beta\cdot \big(1-\hat{r}^*_{j,G}(x)\big)\cdot \pd(x)}{\int_{x}p_{g}(x)+\beta\cdot \big(1-\hat{r}^*_{j,G}(x)\big)\cdot \pd(x)dx}.
\end{align}
\end{proof}

\subsection{Proof of Theorem \ref{thm: global_min}}

\begin{proof}
When $\alpha=0, r_{j,G}(x)=\frac{1}{2}$, for $\alpha= 0, i=1,2$, we have:
\begin{align}
    D_{i,G}^*(x)&=\dfrac{p_{\text{data}}(x)+\beta \cdot \hat{r}_{j,G}^*(x)\cdot \pd(x)}{p_{\text{data}}(x)+p_{g}(x)+\beta\cdot \pd(x)}\rightarrow \dfrac{p_{\text{data}}(x)+\dfrac{\beta}{2} \cdot \pd(x)}{p_{\text{data}}(x)+p_{g}(x)+\beta\cdot \pd(x)}.
\end{align}
This allows us to rewrite $\frac{C(G)}{2}$ as:
\begin{align}
    \frac{C(G)}{2}=&\E_{x\thicksim p_{\text{data}_i}}\left[\log \frac{p_{\text{data}}(x)+\frac{\beta}{2}\cdot \pd(x)}{p_{\text{data}}(x)+p_{g}(x)+\beta\cdot \pd(x)}\right]+\E_{x\thicksim p_{g_i
     }}\left[\log \frac{p_{g}(x)+\frac{\beta}{2}\cdot \pd(x)}{p_{\text{data}}(x)+p_{g}(x)+\beta\cdot \pd(x)}\right].
\end{align}
$\Longrightarrow$
Note that $2\cdot \big(\E_{x\thicksim p_{\text{data}_i}}[-\log2]+\E_{x\thicksim p_{g_i}}[-\log2]\big)=-\log{16}$, by subtracting this expression from $C(G)$, we have:
\begin{align}
    C(G)=&-\log{16}+ 2\cdot KL\Big(p_{g}+ \dfrac{\beta}{2}\cdot \pd\Big|\Big|\dfrac{p_{\text{data}}+p_{g}+\beta\cdot \pd}{2}\Big)\notag\\
    &+2\cdot KL\Big(p_{\text{data}}+\dfrac{\beta}{2}\cdot \pd\Big|\Big|\dfrac{p_{\text{data}}+p_{g}+\beta\cdot \pd}{2}\Big),
\end{align}
where KL is the Kullback-Leibler divergence. Note that:
\begin{align}
    C(G)=-\log{16}+2\cdot  JSD\Big(p_{\text{data}}+\dfrac{\beta}{2}\cdot \pd\Big|\Big|p_{g}+\dfrac{\beta}{2}\cdot \pd\Big),
\end{align}
and the Jensen-Shannon divergence between two distributions is always non-negative and zero only when they are equal, we have shown that $C(G)^*=-\log{16}$ is the global minimum of $C(G)$. Thus, we need $$p_{\text{data}}+\dfrac{\beta}{2}\cdot \pd=p_{g}+\dfrac{\beta}{2}\cdot \pd\Leftrightarrow p_{\text{data}}=p_g.$$
$\Longleftarrow$
Given that $p_{\text{data}}=p_{g}$, we have:
\begin{align}
    C(G)=&\max_{D}\mathcal{L}(G, D_1, D_2)\notag\\
     =&2\cdot \E_{x\thicksim p_{\text{data}_i}}\left[\log \dfrac{p_{\text{data}}(x)+\dfrac{\beta}{2}\cdot \pd(x)}{p_{\text{data}}(x)+p_{g}(x)+\beta\cdot \pd(x)}\right]+2\cdot \E_{x\thicksim p_{g_i}}\left[\log \dfrac{p_{g}(x)+\dfrac{\beta}{2}\cdot \pd(x)}{p_{\text{data}}(x)+p_{g}(x)+\beta\cdot \pd(x)}\right]\notag\\
     =& 2\cdot \left(\log\dfrac{1}{2}+\log\dfrac{1}{2}\right)=-\log{16}.
\end{align}

\end{proof}

\subsection{Proof of Theorem \ref{thm:stab1}}
\begin{proof}
Ignoring the weight $\beta$, the duel term of discriminator $D_i$ w.r.t. its diverged peer discriminator $\tilde{D}_{j}$ becomes:
{\begin{align}\label{eqn:split}
    &\quad\text{Duel}(D_i)|_{\tilde{D}_{j}}:= \E_{x\thicksim \pd}\Big[\ell\Big(D_i(x),\BR\big(\tilde{D}_{j}(x)>\dfrac{1}{2}\big)\Big)-\alpha \cdot \ell\Big(D_i(x_{p_1}),\BR\big(\tilde{D}_{j}(x_{p_2})>\dfrac{1}{2}\big)\Big)\Big]\notag\\
    &=\E_{x\thicksim \pd,Y^*_j=1}\Big[\mathbb{P}(\tilde{Y}_j=1|Y^*_j=1)\cdot \ell\big(D_{i}(x),1\big)+\mathbb{P}(\tilde{Y}_j=0|Y^*_j=1)\cdot \ell\big(D_{i}(x),0\big)\Big]\notag
    \\
    &+\E_{x\thicksim \pd,Y^*_j=0}\Big[\mathbb{P}(\tilde{Y}_j=1|Y^*_j=0)\cdot \ell\big(D_{i}(x),1\big)+\mathbb{P}(\tilde{Y}_j=0|Y^*_j=0)\cdot \ell\big(D_{i}(x),0\big)\Big]\notag\\
    &-\alpha \cdot \E_{x_{p_1}\thicksim \pd}\Big[\mathbb{P}(\tilde{Y}_{j}=1)\cdot \ell\big(D_{i}(x_{p_1}),1\big)+\mathbb{P}(\tilde{Y}_{j}=0)\cdot \ell\big(D_{i}(x_{p_1}),0\big)\Big]\notag\\
    &=\E_{x\thicksim \pd,Y^*_j=1}\Big[(1-e_{\text{data}, j})\cdot \ell(D_{i}(x),1)+e_{\text{data}, j}\cdot \ell(D_{i}(x),0)\Big]
    \notag\\
    &+\E_{x\thicksim \pd,Y^*_j=0}\Big[e_{g,j}\cdot \ell(D_{i}(x),1)+(1-e_{g,j})\cdot \ell(D_{i}(x),0)\Big]\notag\\
    &-\alpha \cdot \E_{x_{p_1}\thicksim \pd}\Big[\big[\mathbb{P}(Y^*_j=1)\cdot (1-e_{\text{data},j})+\mathbb{P}(Y^*_j=0)\cdot e_{g,j}\big]\cdot \ell\big(D_{i}(x_{p_1}),1\big)\Big]\notag\\
    &-\alpha \cdot \E_{x_{p_1}\thicksim \pd}\Big[\big[\mathbb{P}(Y^*_j=1)\cdot e_{\text{data},j}+\mathbb{P}(Y^*_j=0)\cdot (1-e_{g,j})\big]\cdot \ell\big(D_{i}(x_{p_1}),0\big)\Big]\notag\\
    &=\E_{x\thicksim \pd,Y_j^*=1}\Big[(1-e_{\text{data}, j}-e_{g, j})\cdot \ell(D_{i}(x),1)+e_{\text{data}, j}\cdot \ell(D_{i}(x),0)+e_{g, j}\cdot \ell(D_{i}(x),1)\Big]
   \notag \\
    &+\E_{x\thicksim \pd,Y_j^*=0}\Big[(1-e_{\text{data}, j}-e_{g,j})\cdot \ell(D_{i}(x),0)+e_{data,j}\cdot \ell(D_{i}(x),0)+e_{g,j}\cdot \ell(D_{i}(x),1)\Big]\notag\\
      &-\alpha \cdot \E_{x_{p_1}\thicksim \pd}\Big[c_1\cdot \ell\big(D_{i}(x_{p_1}),1\big)\Big]\notag-\alpha \cdot \E_{x_{p_1}\thicksim \pd}\Big[c_2\cdot \ell\big(D_{i}(x_{p_1}),0\big)\Big],
\end{align}}
where we define: 
\begin{align*}
  &c_1:=\mathbb{P}(Y^*_j=1)\cdot (1-e_{\text{data},j}-e_{g,j})+\mathbb{P}(Y^*_j=0)\cdot e_{g,j}+\mathbb{P}(Y^*_j=1)\cdot e_{g,j},\\
  &c_2:=\mathbb{P}(Y^*_j=0)\cdot (1-e_{\text{data},j}-e_{g,j})+\mathbb{P}(Y^*_j=1)\cdot e_{\text{data},j}+\mathbb{P}(Y^*_j=0)\cdot e_{\text{data},j},
\end{align*}
for a clear presentation. Proceeding the previous deduction, we then have:
\begin{align}
    \text{Duel}(D_i)|_{\tilde{D}_{j}}&=(1-e_{\text{data}, j}-e_{g, j})\cdot\E_{x\thicksim \pd}\Big[ \ell\big(D_{i}(x),Y^*_j\big)\Big]+\E_{x\thicksim \pd}\Big[e_{\text{data}, j}\cdot \ell\big(D_{i}(x),0\big)+e_{g, j}\cdot \ell\big(D_{i}(x),1\big)\Big]\notag
    \\
    &-\alpha \cdot (1-e_{\text{data},j}-e_{g,j})\cdot  \E_{x\thicksim \pd}\Big[ \ell\big(D_{i}(x_{p_1}),Y^*_j\big)\Big]-\alpha \cdot \E_{x\thicksim \pd}\Big[e_{\text{data}, j}\cdot \ell\big(D_{i}(x),0\big)+e_{g, j}\cdot \ell\big(D_{i}(x),1\big)\Big].
\end{align}
Thus, 
\begin{align}
    \text{Duel}(D_i)|_{\tilde{D}_{j}}=&(1-e_{\text{data}, j}-e_{g, j})\cdot \text{Duel}(D_i)|_{D^*_{j,G}}\notag\\
    +&\underbrace{(1-\alpha)\cdot \E_{x\thicksim \pd} \big[e_{\text{data}, j}\cdot \ell\big(D_{i}(x),0\big)+e_{g, j}\cdot \ell\big(D_{i}(x),1\big)\big]}_{\textbf{Bias}}.
\end{align}
Note that:
\begin{align}
    \textbf{Bias}=(1-\alpha)\cdot \E_{x\thicksim \pd} \big[e_{\text{data}, j}\cdot \log\big(1-D_{i}(x)\big)+e_{g, j}\cdot \log\big(D_{i}(x)\big)\big].
\end{align}
Thus, given $\alpha=1$, the \textbf{Bias} term is cancelled out. When $e_{\text{data}, j}+e_{g, j}<1$, we have:
\begin{align}
    \text{Duel}(D_i)|_{\tilde{D}_{j}}=&(1-e_{\text{data}, j}-e_{g, j})\cdot \text{Duel}(D_i)|_{D^*_{j,G}},
\end{align}
and we further have:
\begin{align}
    \max_{D_{i}} \text{Duel}(D_i)|_{\tilde{D}_{j}}
    =& \max_{D_{i}} \text{Duel}(D_i)|_{D^*_{j,G}}.
\end{align}

\end{proof}

\subsection{Proof of Theorem \ref{thm: conv1}}
\begin{proof}
When $\beta=0$, the overall min-max game becomes:
\begin{align}
    &\min_{G}\max_{D_1,D_2}\mathcal{L}(D_1,D_2, G)\notag\\
    =& \min_{G} \max_{D_1,D_2} \E_{x\thicksim p_{\text{data}}}\big[\log D_1(x)\big]+\E_{z\thicksim p_{z}}\Big[\log \Big(1-D_1\big(G(z)\big)\Big)\Big]\notag\\
    &\qquad \qquad +\E_{x\thicksim p_{\text{data}}}\big[\log D_2(x)\big]+\E_{z\thicksim p_{z}}\Big[\log \Big(1-D_2\big(G(z)\big)\Big)\Big].
\end{align}
Since we assume enough capacity, the inner max game is achieved if and only if:
$D_1(x)=D_2(x)=\frac{p_{\text{data}}(x)}{p_{\text{data}}(x)+p_g(x)}$. To prove $p_g$ converges to $p_{\text{data}}$, only need to reproduce the proof of proposition 2 in \cite{gan}. We omit the details here.

\end{proof}

\section{Experiment Details and Additional Results}\label{app:exp}
\paragraph{Model Architectures}

For the small-scale datasets, we used a shallow version of generator and discriminator: three convolution layers in the generator and four layers in the discriminators. We use a deep version of generator and discriminator for natural scene and human face image generation, which have three convolution layers in the generator and seven layers in the discriminators. The deep version is the original design of DCGAN\cite{DCGAN}. The peer discriminator uses the duplicate version of the first one. 

\subsection{Architecture Comparison Between GAN, D2GAN and \PG{}}
Figure \ref{Fig:compare_arch} shows the architecture designs of single discriminator, dual discriminator, and our proposed \PG{}. Compared with Vanilla GAN, \PG{} has one more identical discriminator and a competitive Duel Game between two discriminators. The introduced Duel Game induces diversified generated samples by discouraging the agreement between $D_1$ and $D_2$. In D2GAN, although both discriminators are trained with different loss functions, they do not interfere with each other in the training. 
\begin{figure*}[!htb]
\vspace{-0.07in}
    \centering
    \vspace{-0.1in}
    {\includegraphics[width=0.72\textwidth]{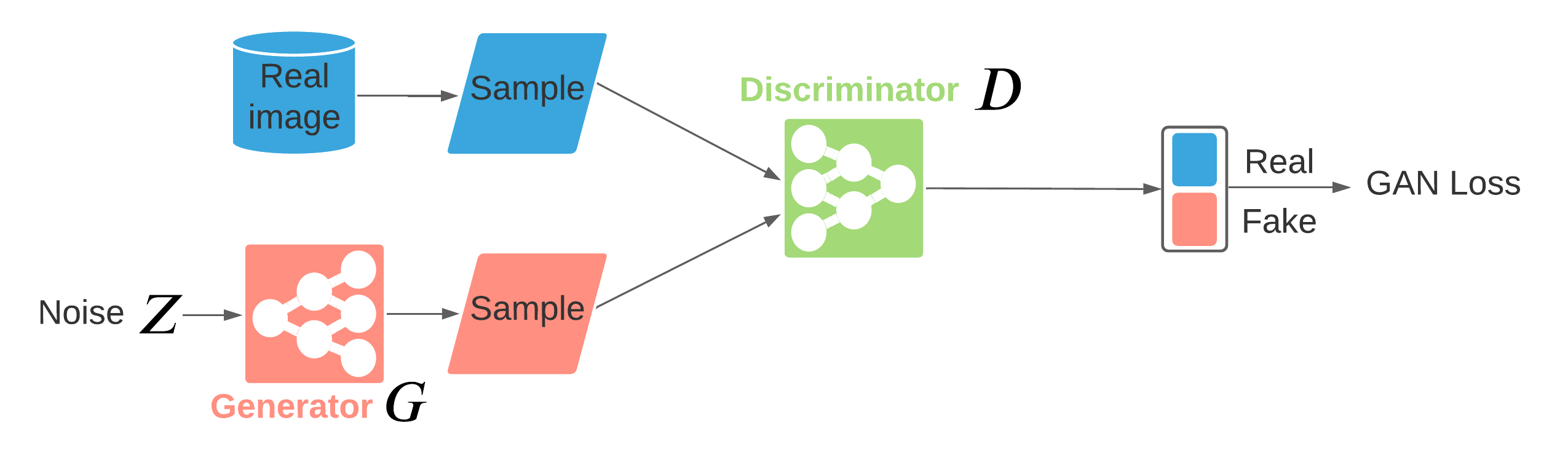}
    }
    \vspace{-0.1in}
    {\includegraphics[width=0.72\textwidth]{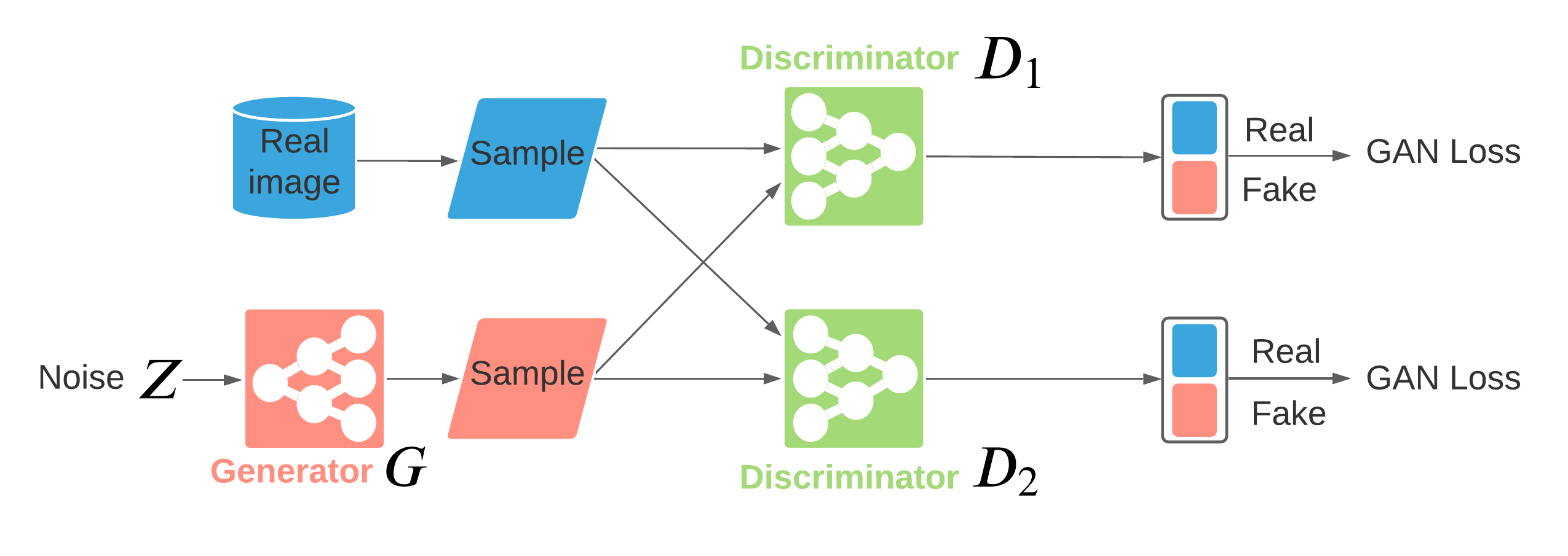}
    }
    {\includegraphics[width=0.72\textwidth]{figures/flowchart_peergan.pdf}
    }
        \vspace{-5pt}
        \caption{Architecture comparisons between GAN based method (first row), dual discriminators GAN based method (second row) and \PG{} (third row).}
    \label{Fig:compare_arch}
\end{figure*}

\subsection{Additional Experiment Results}

StyleGAN-ADA \cite{karras2020training} is the state-of-the-art method in image generation. We applied our duel game to StyleGAN-ADA and further improves its performance. On CelebA \cite{CelebA} dataset, we improved FID from 4.85 to 4.52, and FFHQ-10k\cite{karras2019style} dataset improved FID from 7.24 to 6.01. We show the generated image results (trained on CelebA) in Figure \ref{Fig:celebA_sup}.
 \begin{figure}
 \vspace{-0.1in}
    \centering
    {\includegraphics[width=1.0\textwidth]{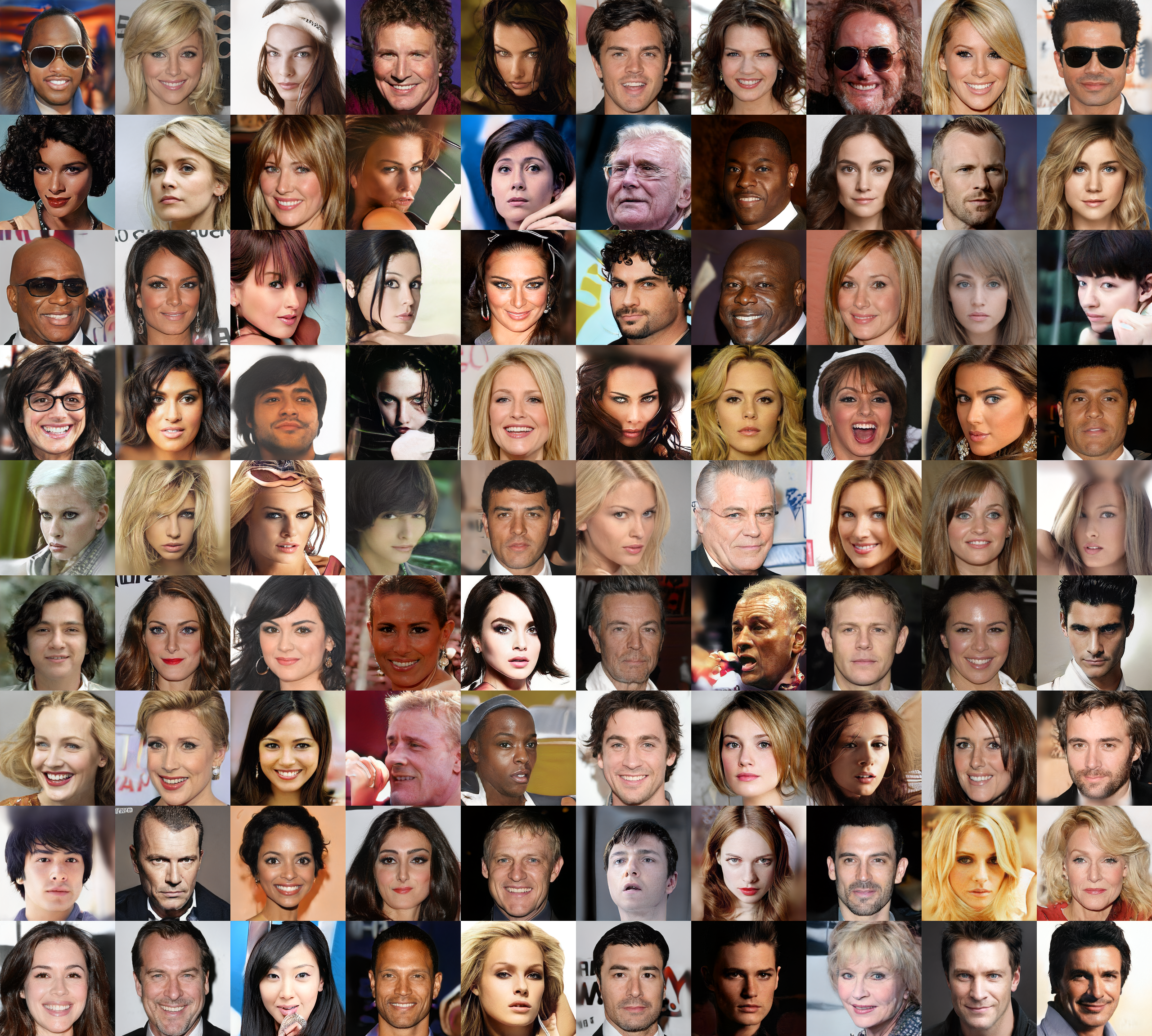}
    }
 \vspace{-0.15in}
    \caption{More CelebA image generation results of applying duel game on StyleGAN-ADA.}
 \vspace{-0.15in}
    \label{Fig:celebA_sup}
\end{figure}

\subsection{Additional Experiment Details}

\paragraph{Model Architectures}

For the small-scale datasets, we used a shallow version of generator and discriminator: three convolution layers in the generator and four layers in the
discriminators. We use a deep version of generator and discriminator for natural
scene and human face image generation, which have three convolution layers in
the generator and seven layers in the discriminators. The deep version is the
original design of DCGAN\cite{DCGAN}. The peer discriminator uses the duplicate version of the first one.

\paragraph{Hyper-Parameters}\label{app:para} 
 
\PG{} achieves low FID scores and high IS scores when $\alpha$ and $\beta$ are simply set to constant values. However we found that we could obtain an approximately 10\% improvement through dynamic tuning. The parameter $\beta$ controls the overall weight of $\dd$, while $\alpha$ punishes the condition when $D_1$ over-agrees with $D_2$. In the early training phase when we have an unstable generator and discriminator, we set $\alpha$ and $\beta$ to 0. As training progresses, we gradually increase these parameters to a max value, which helps with vanishing gradients. After the midpoint of training we decrease these parameters to help the discriminators converge, until the parameters reach approximately 0 at the end of the training process. We adopt 0.3, 0.5 as the max value for $\alpha$ and $\beta$, respectively.

\subsection{Ablation Study of \PG{}}
During training, We initialize the $\alpha$ and $\beta$ as 0, and gradually increase to the set maximum value. We experimentally discover $\alpha$=0.3 and $\beta$=0.5 can achieve the best FID score in the datasets we tested on. Table 3 shows an thorough ablation of  different hyper-parameter setting on STL-10 dataset. The bold text are the best $\alpha$ setting when beta is fixed. 
 
 \begin{figure}
    \centering
    {\includegraphics[width=0.5\textwidth]{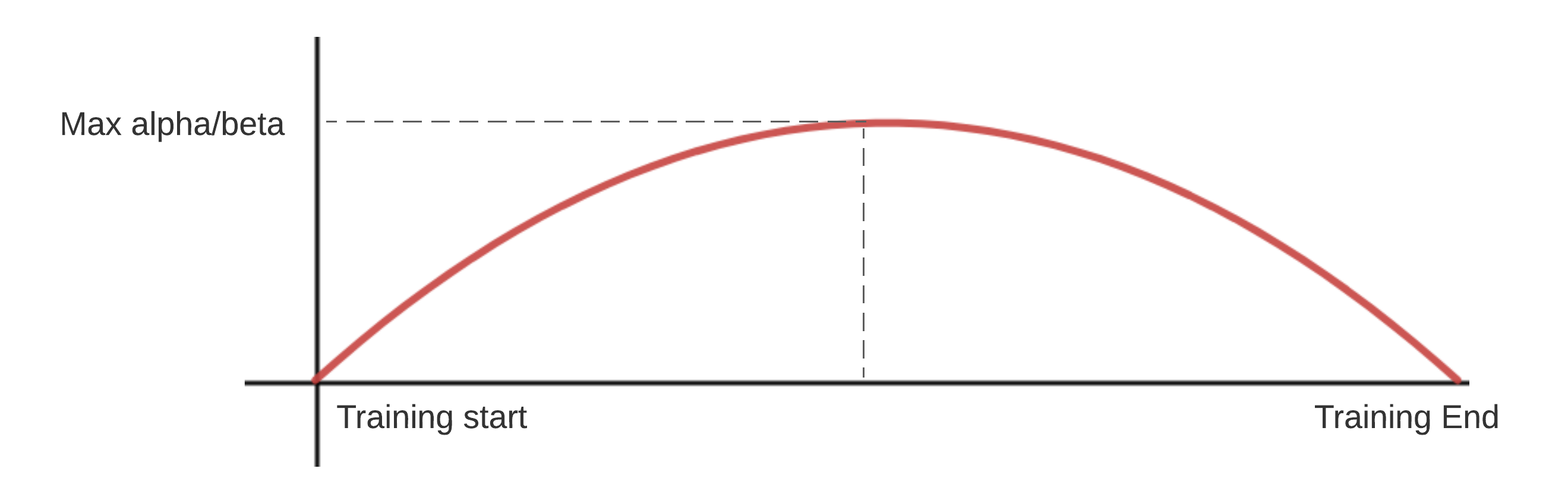}
    }
    \caption{The trend of $\alpha, \beta$ in the training.}
    \end{figure}
\begin{table}
    \centering
  \vspace{-0.3in}
\begin{tabular}{l|cccccc}
\hline
            &$\alpha$=0.1    & $\alpha$=0.3   &$\alpha$=0.5   & \:\:\:\:$\alpha$=0.7\:\:\:\:\: &\:\:$\alpha$=0.9\:\:   \\ \hline
$\beta$=0.25	& 60.88 & 56.01	& \textbf{51.86} & 58.17 & 60.91 \\
$\beta$=0.50	& 58.77 & \textbf{51.37} & 58.45 & 55.16 & 57.75 \\
$\beta$=0.75	& \textbf{55.07} & 59.58 &58.58 & 58.22 & 57.75 \\ \hline
\end{tabular}
\caption{Ablation study of max $\alpha$ and max $\beta$ value tuning on STL-10 dataset (evaluate with FID score).}
\end{table}

\subsection{Stability of Training}\label{app:stab}
In this section, we empirically show the stability of \PG{} training procedure. We adopt STL-10 dataset and $\beta=0.25$ for illustration. In Figure \ref{Fig:loss1} and \ref{Fig:loss2}, we visualize the loss of two discriminators during the training procedure of STL-10 dataset. The red lines indicate the smoothed trend of the loss evaluated on the generated images and real images. Real losses are represented by the shaded red lines. Although there exists certain unstable episodes (the difference between smoothed loss and the real loss is large) for both discriminators, the overall trend of both discriminators are stable. What is more, we do observe that $D_1$ and $D_2$ hardly experience unstable episodes at the same time. This phenomenon further validates our conclusion in Theorem 2: an unstable/diverged discriminator hardly disrupts the training of its peer discriminator!

\begin{figure*}[!htb]
\vspace{-0.15in}
    \centering
    {\includegraphics[width=0.3\textwidth]{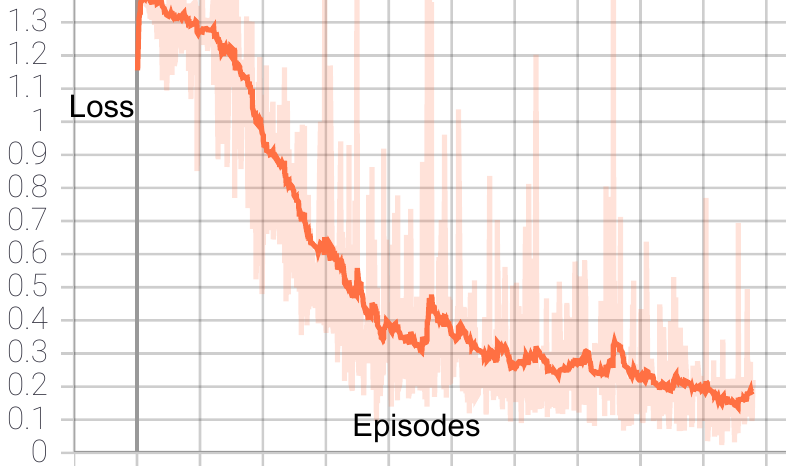}
    }
    {\includegraphics[width=0.3\textwidth]{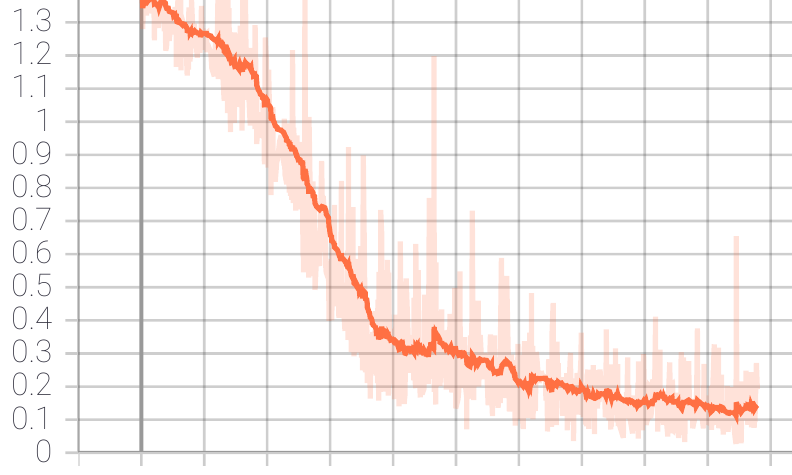} 
    }
    {\includegraphics[width=0.3\textwidth]{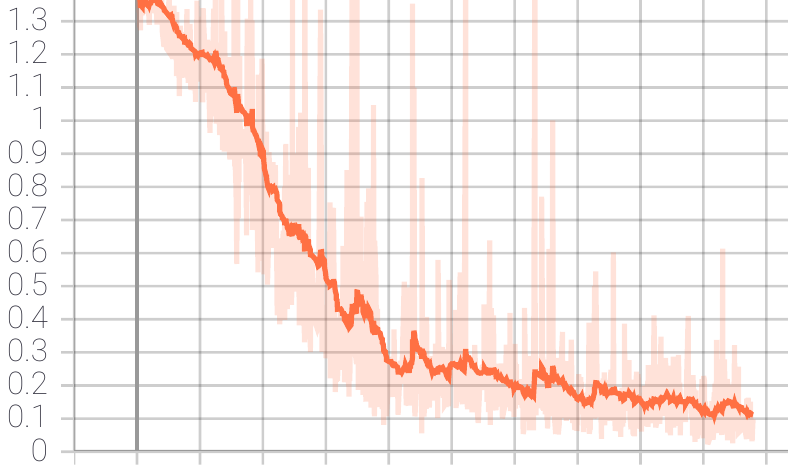}
    }
        \vspace{-5pt}
        \caption{The loss of $D_1$ in \PG{} with $\beta=0.25$ on STL-10 dataset, left: $\alpha=0.3$; middle: $0.5$; right: $\alpha=0.7$.}
    \label{Fig:loss1}
\end{figure*}

\begin{figure*}[!htb]
\vspace{-0.15in}
    \centering
    {\includegraphics[width=0.3\textwidth]{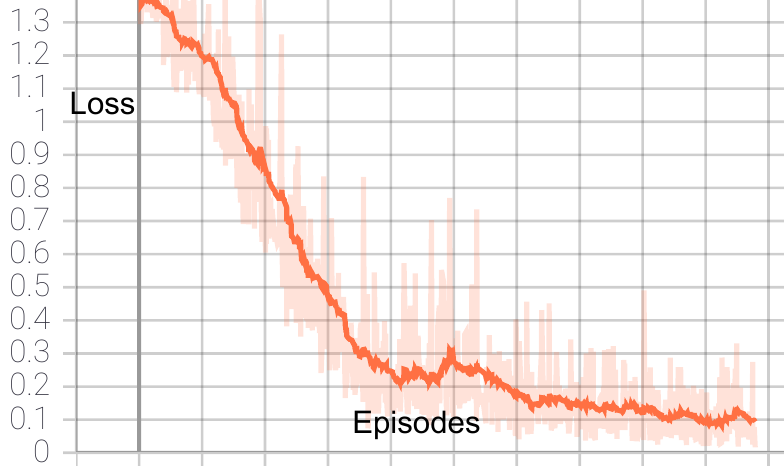}
    }
    {\includegraphics[width=0.3\textwidth]{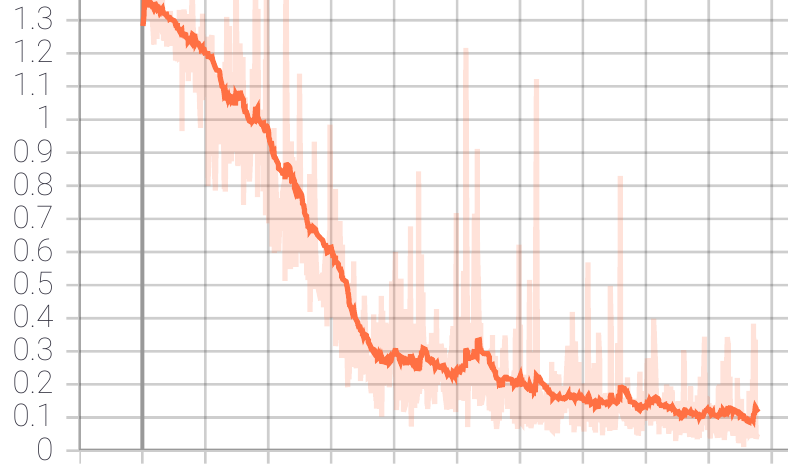} 
    }
    {\includegraphics[width=0.3\textwidth]{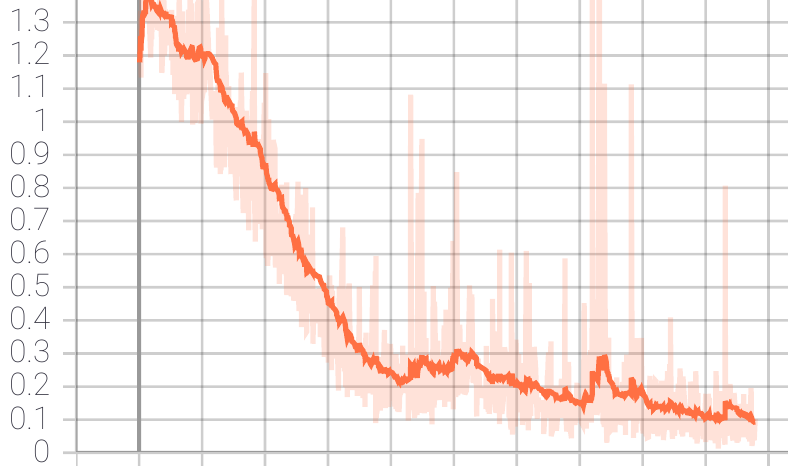}
    }
        \vspace{-5pt}
        \caption{The loss of $D_2$ in \PG{} with $\beta=0.25$ on STL-10 dataset, left: $\alpha=0.3$; middle: $0.5$; right: $\alpha=0.7$.}
    \label{Fig:loss2}
\end{figure*}

\paragraph{\textbf{Agreements Between Two Discriminators}}
We also empirically estimate the agreement level between two discriminators while training. In Figure \ref{Fig:agree}, the $y-$axis denotes the percentage of predictions that reach a consensus by $D_1$ and $D_2$. The smoothed curve depicts the overall change of the agreement level. At the initial stage, $D_i$ is not encouraged to agree overly on its peer discriminator $D_j$. As the training progresses, the agreement level gradually increases to a high value to help the convergence of the whole training process. The shaded red line means that the practical agreement level fluctuates around the smoothed line, incurs a certain degree of randomness and prevents discriminators from getting stuck in a local optimum.

\begin{figure*}[!htb]
\vspace{-0.15in}
    \centering
    {\includegraphics[width=0.3\textwidth]{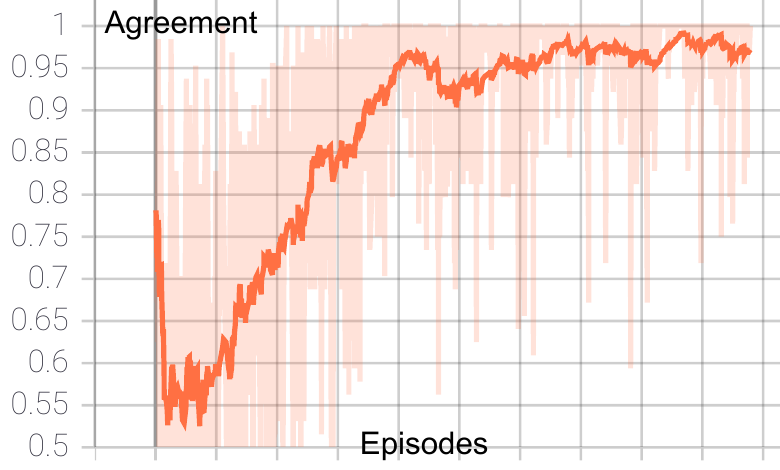}
    } 
    {\includegraphics[width=0.3\textwidth]{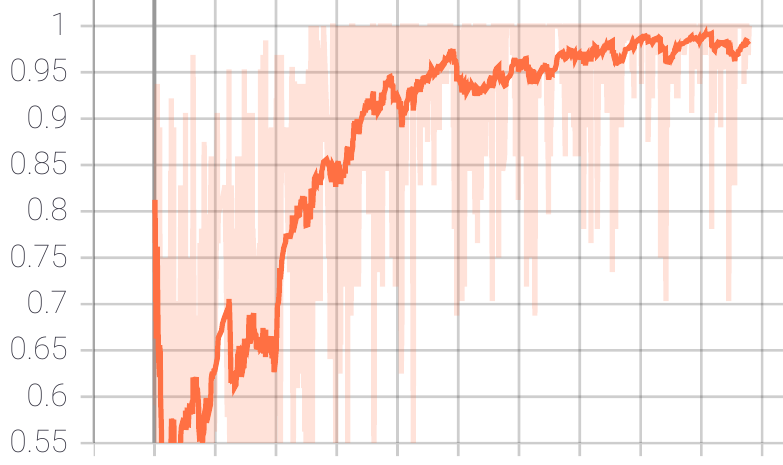} 
    }
    {\includegraphics[width=0.3\textwidth]{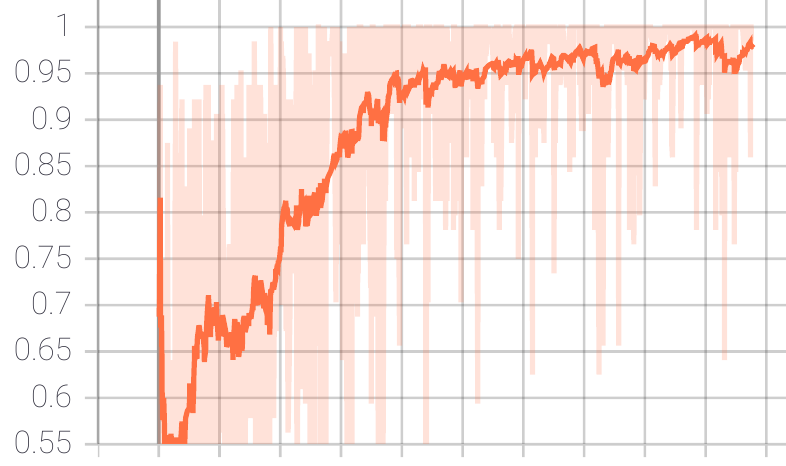}
    }
        \vspace{-5pt}
        \caption{The agreement level between $D_1$ and $D_2$ in \PG{} with $\beta=0.25$ on STL-10 dataset, left: $\alpha=0.3$; middle: $0.5$; right: $\alpha=0.7$.}
    \label{Fig:agree}
\end{figure*}

\end{document}